\providecommand{\N}{{\mathbb N}}
\providecommand{\R}{{\mathbb R}}
\providecommand{\id}{I}
\providecommand{\Sigmahat}{\widehat{\Sigma}}
\providecommand{\Sigmahatlambda}[1][\lambda]{\widehat{\Sigma}_{#1}}
\providecommand{\Sigmalambda}[1][\lambda]{\Sigma_{#1}}
\providecommand{\ridgeest}[1][\lambda]{\hat{\beta}^{\mathrm{RR}}_{#1}}
\providecommand{\ridgegdest}[2][\eta]{\hat{\beta}^{\mathrm{GD}}_{\lambda,{#1},{#2}}}
\providecommand{\pengradflowest}[2][\lambda]{\hat{\beta}^{\mathrm{GF}}_{{#1},{#2}}}
\providecommand{\ridgecgest}[2][\lambda]{\hat{\beta}^{\mathrm{CG}}_{{#1},{#2}}}
\providecommand{\ridgeapproxerrorCG}[2][\lambda]{A^{\mathrm{CG}}_{{#1},\beta_{\lambda},{#2}}}
\providecommand{\ridgestocherrorCG}[2][\lambda]{S^{\mathrm{CG}}_{{#1},{#2}}}
\providecommand{\barridgeapproxerrorCG}[2][\lambda]{\bar A^{\mathrm{CG}}_{{#1},\beta_{\lambda},{#2}}}
\providecommand{\barridgestocherrorCG}[2][\lambda]{\bar S^{\mathrm{CG}}_{{#1},{#2}}}
\providecommand{\barapproxerrorCG}[2][\lambda]{\bar A^{\mathrm{CG}}_{{#1},\gamma,{#2}}}
\providecommand{\barstocherrorCG}[2][\lambda]{\bar S^{\mathrm{CG}}_{{#1},{#2}}}
\providecommand{\ridgecrossCG}[2][\lambda]{C^{\mathrm{CG}}_{{#1},\beta_{\lambda},{#2}}}
\providecommand{\regpredloss}[2][\lambda]{\ell^{\mathrm{in}}_{{#1},\gamma}({#2})}
\providecommand{\regpredrisk}[2][\lambda]{{\cal R}^{\mathrm{in}}_{{#1},\gamma}({#2})}
\providecommand{\outregpredloss}[2][\lambda]{\ell^{\mathrm{out}}_{{#1},\gamma}({#2})}
\providecommand{\outregpredrisk}[2][\lambda]{{\cal R}^{\mathrm{out}}_{{#1},\gamma}({#2})}
\providecommand{\ridgepredloss}[2][\lambda]{\ell^{\mathrm{in}}_{{#1},\beta_{\lambda}}({#2})}
\providecommand{\penpredrisk}[2][\lambda]{{\cal R}^{\mathrm{in}}_{{#1},\beta_0}({#2})}
\providecommand{\emprisk}[2][\lambda]{{\cal E}_{#1}({#2})}
\providecommand{\filterGF}[1][t]{R_{#1}^{\mathrm{GF}}}
\providecommand{\filterCG}[1][t]{R_{#1}^{\mathrm{CG}}}
\providecommand{\filterRR}[1][\lambda,\lambda']{R_{#1}^{\mathrm{RR}}}
\providecommand{\eps}{\varepsilon}
\renewcommand{\le}{\leqslant}
\renewcommand{\ge}{\geqslant}
\DeclareMathOperator{\E}{{\mathbb E}}
\DeclareMathOperator*{\argmin}{argmin}
\DeclareMathOperator{\Var}{Var}
\DeclareMathOperator{\trace}{trace}
\DeclareMathOperator{\diag}{diag}
\DeclareMathOperator{\rank}{rank}
\DeclarePairedDelimiter{\abs}{\lvert}{\rvert}
\DeclarePairedDelimiter{\norm}{\lVert}{\rVert}
\DeclarePairedDelimiterX{\scapro}[2]{\langle}{\rangle}{{#1},{#2}}
\DeclarePairedDelimiter{\ceil}{\lceil}{\rceil}
\begin{document}

\title{Comparing regularisation paths of \\ (conjugate) gradient estimators in ridge regression}

\author{\parbox[t]{13cm}{\centering
Laura Hucker\orcidlink{0009-0003-3293-5281}\textsuperscript{$*$}, 
Markus Rei\ss\textsuperscript{$\dagger$} and 
Thomas Stark\orcidlink{0009-0001-8154-1129}\textsuperscript{$\ddagger$}\\[0.5em]
{\small Institute of Mathematics, Humboldt-Universit{\"a}t zu Berlin, Germany\textsuperscript{$*$,$\dagger$}}\\
{\small Department of Statistics and Operations Research, University of Vienna, Austria\textsuperscript{$\ddagger$}}\\[0.5em]
{\small \mbox{} \textsuperscript{$*$}huckerla@math.hu-berlin.de} \quad
{\small \textsuperscript{$\dagger$}mreiss@math.hu-berlin.de} \quad
{\small \textsuperscript{$\ddagger$}thomas.stark@univie.ac.at}
}}

\date{}

\maketitle

\begin{abstract}
We consider standard gradient descent, gradient flow and conjugate gradients as iterative algorithms for minimising a penalised ridge criterion in linear regression. While it is well known that conjugate gradients exhibit fast numerical convergence, the statistical properties of their iterates are more difficult to assess due to inherent non-linearities and dependencies. On the other hand, standard gradient flow is a linear method with well-known regularising properties when stopped early. By an explicit non-standard error decomposition we are able to bound the prediction error for conjugate gradient iterates by a corresponding prediction error of gradient flow at transformed iteration indices. This way, the risk along the entire regularisation path of conjugate gradient iterations can be compared to that for regularisation paths of standard linear methods like gradient flow and ridge regression. In particular, the oracle conjugate gradient iterate shares the optimality properties of the gradient flow and ridge regression oracles up to a constant factor. Numerical examples show the similarity of the regularisation paths in practice.%

\addvspace{0.5\baselineskip}\noindent%
{\footnotesize \emph{Keywords:} Conjugate gradients,
gradient descent,
ridge regression,
implicit regularisation,
early stopping,
prediction risk}

\addvspace{0.5\baselineskip}\noindent%
{\footnotesize \emph{MSC Classification:} 62J07,
62-08,
65F10}
\end{abstract}


\section{Introduction}

Conjugate gradient (CG) methods are among the computationally most efficient algorithms for solving large systems of linear equations. In statistics and machine learning they are also known as partial least squares (PLS) methods, see \citet{Hel1990}, \citet[Section~3.5.2]{HasTibFri2009} and the excellent survey \citet{RosKra2006}. The mathematical analysis of CG, however, is challenging due to its greedy, non-linear setup, which leads to intricate dependencies in each iteration. A breakthrough in understanding the implicit regularisation of CG iterates was achieved by \citet{Nem1986} for deterministic inverse problems, which established CG methods in combination with early stopping as one of the leading numerical algorithms for solving and regularising inverse problems, see
\citet{EngHanNeu1996}. Further statistical analysis relied on these results together with high probability concentration bounds to control statistical errors, see \citet{BlaKra2016,SinKriMun2016} and the references therein. Using a different analysis, \citet{FinKri2023} show that CG is superior to principal component regression in latent factor models. Recently in \citet{HucRei2025}, a new error decomposition for CG has paved the way for a non-asymptotic understanding of the approximation and stochastic errors in statistical inverse problems. Inspired by these results, we provide a thorough analysis of the prediction error for all CG iterates, which, after linear interpolation, we call the CG regularisation path, for minimising the penalised least squares criterion of ridge regression.

Consider a potentially high-dimensional linear regression setting under random design, where the number of features $p$ can possibly exceed the sample size $n$. We are given i.i.d.\ observations $(x_i,y_i)$, $i=1,\dots,n$, of $\R^p$-valued feature vectors $x_i$ and $\R$-valued responses $y_i$ satisfying
\begin{equation*}
    y_i = x_i^{\top} \beta_0 + \varepsilon_i, \quad i=1,\dots,n.
\end{equation*}
Here, $\beta_0 \in \R^p$ is the unknown true coefficient vector.
The error variables $\varepsilon_i$ satisfy $\E[\varepsilon_i\,|\,x_i] = 0$ and $\Var(\eps_i\,|\,x_i) = \sigma^2$ for some noise level $\sigma > 0$. Using the notation $y \coloneqq (y_1,\dots,y_n)^\top$, $X \coloneqq (x_1,\dots,x_n)^\top$ and $\varepsilon \coloneqq (\varepsilon_1,\dots,\varepsilon_n)^{\top}$ for the response vector, the feature matrix and the error vector, respectively, we have the linear model
\begin{equation}
    y = X \beta_0 + \varepsilon. \label{EqLinModel}
\end{equation}

The ridge regression (RR) estimator of $\beta_0$ is obtained as minimiser of the penalised least squares criterion
\begin{equation}
    \ridgeest \coloneqq \argmin_{\beta \in \R^p} \emprisk{\beta} \quad \text{with} \quad \emprisk{\beta} \coloneqq \tfrac{1}{2n} \norm{y-X\beta}^2 + \tfrac{\lambda}{2} \norm{\beta}^2, \label{EqRidgeProblem}
\end{equation}
where $\lambda\ge 0$ is the penalty parameter. For $\lambda=0$ we take the minimum-norm solution for $\ridgeest$, see \cref{SecPreliminaries} below. In practice, the RR estimator $\ridgeest$ is calculated by iterative solvers because direct calculations, e.g.\ via an LU decomposition, are numerically too costly when $n$ and $p$ are large.
Standard gradient descent (GD) or conjugate gradients are natural choices of optimisation algorithms in such a scenario, since they are memory-efficient and require only  matrix-vector products in each iteration, allowing for fast and scalable computation in high-dimensional settings. Note that the equally popular quasi-Newton solver BFGS reduces to conjugate gradients, when applied to the quadratic problem \eqref{EqRidgeProblem} \cite{Naz1979}. 
More precisely, as shown by \citet[Section~2]{Naz1979}, when applied to a strictly convex quadratic objective, as in penalised linear regression with $\lambda > 0$, full-memory BFGS initialised with a scaled identity matrix and using an exact line search generates the same iterates as CG. Despite this equivalence, the CG algorithm is generally computationally preferable in practice, as it requires only matrix-vector products and minimal memory, whereas BFGS maintains and updates a dense inverse Hessian approximation in each iteration. This advantage becomes increasingly significant for high-dimensional problems with large $p$. Moreover, when the feature matrix is sparse, the default solver for ridge regression in the standard Python package \textit{scikit-learn} \cite{PedVarGra2011} uses conjugate gradients, profiting from its fast numerical convergence.
Interestingly, the design of more advanced CG algorithms is nowadays driven by statistical applications, ranging from ridge regression to high-dimensional Gaussian sampling, see  \citet{CheHubLin2025} and the references therein.

Gradient descent applied to the ordinary LS problem (the unpenalised criterion in \eqref{EqRidgeProblem} with $\lambda=0$) can be interpreted as an explicit Euler discretisation of the gradient flow (GF) ordinary differential equation
\begin{equation*}
    \dot{\beta}(t) = - \nabla \emprisk{\beta(t)}, \quad \beta(0)=0.
\end{equation*}
Both, discrete-time GD iterates and the regularisation path of its continuous-time analogue GF  exhibit \emph{implicit regularisation}, in the sense that early stopping can substantially improve prediction performance even in the absence of an explicit penalty. The implicit regularisation properties of GF were studied by \citet{AliKolTib2019}.
In particular, the prediction error at each GF iterate is bounded by a small factor times the RR prediction error under a suitably reparametrised ridge penalty $\lambda'$.
In \cref{PropGFRR} below, we generalise this result to gradient flow solving the penalised criterion~\eqref{EqRidgeProblem}. Based on a precise non-asymptotic CG error control, our main result then bounds the prediction error for conjugate gradients after $t$ iterations by the prediction error for gradient flow at iteration $\tau_t$ (\cref{ThmMain}). The time reparametrisation $\tau_t$ is genuinely data-dependent due to the non-linear nature of CG, but explicit in terms of the CG residual polynomial. This comparison result is surprisingly strong and implies that CG has the same (implicit) regularisation effects as gradient flow and a fortiori ridge regression. As a consequence, our result extends the analysis of \citet{AliKolTib2019} to a penalised setting, while also incorporating an additional, computationally more efficient algorithm into the comparison framework. Furthermore, the constant involved in the bound only depends on the spectrum of the empirical covariance matrix of the feature vectors $x_i$, which is discussed for polynomial eigenvalue decay, Marchenko--Pastur type spectral distributions and spiked covariance models in \cref{ExC0}. This result is thus in line with other recent comparison and implicit regularisation theorems like \citet{AliDobTib2020} for  stochastic gradient flow in regression or \citet{WuBarTel2025} for standard gradient descent in logistic regression.

The comparison result in~\cref{ThmMain} allows in particular to bound the CG oracle error (i.e.\ the minimal prediction error along the CG regularisation path) by the corresponding GF and RR oracle errors, see~\cref{CorOracleRisk}. In~\cref{PropGFMon}, we establish that the prediction error along the GF regularisation path is monotonically decreasing for sufficiently large penalties~$\lambda$. As a consequence, a further application of the comparison result in~\cref{ThmMain} yields a corresponding monotonic bound on the CG prediction errors.
Since the main results are derived for the in-sample prediction risk due to the intricate non-linear dependencies in the CG iterates, we provide a transfer from in-sample to out-of-sample prediction risk in \cref{PropOutPredRisk}. This high probability result is essentially valid if the effective rank of the feature covariance matrix is small compared to the sample size. The theoretical results are then illustrated in a high-dimensional simulation and a real data example, where the regularisation paths of CG, GD and RR indeed closely resemble each other, see \cref{SecNum}. A final discussion of our results is given in Section \ref{SecConclusion}, and a table summarising the notation is provided in the \hyperref[TblNotation]{Appendix}.

\section{Ridge regression, gradient flow and conjugate gradients} \label{SecPreliminaries}

Let us first fix some standard notation. The Euclidean norm and scalar product are denoted by $\norm{\cdot}$ and $\scapro{\cdot}{\cdot}$, respectively. For a square matrix $M$ we denote by $\trace(M)$ its trace, by $\norm{M}$ its spectral norm and by $\norm{M}_{HS}=\trace(M^\top M)^{1/2}$ its Hilbert--Schmidt or Frobenius norm. The $p\times p$-identity matrix is denoted by $I_p$. For symmetric matrices $S\in\R^{p\times p}$ with diagonalisation $S=O^\top \diag(\lambda_i)O$ in terms of an orthogonal matrix $O$ and a diagonal matrix of eigenvalues $\lambda_i$, we often apply functional calculus, defining $f(S)=O^\top \diag(f(\lambda_i))O$ for real functions $f$. We write $a\lesssim b$ or $a={\cal O}(b)$ if $a\le Cb$ holds for some constant $C>0$, not depending on parameters involved in the quantities $a,b$. Analogously, $a\thicksim b$ means $a\lesssim b$ and $b\lesssim a$. We set $a\wedge b=\min(a,b)$ and $a\vee b=\max(a,b)$.

For $\lambda>0$ the minimiser of the ridge criterion~\eqref{EqRidgeProblem} is unique by strict convexity, while for $\lambda=0$ we take $\ridgeest[0]=X^+y$, the minimum-norm solution of $y=X\beta$, where $X^{+}$ is the Moore--Penrose pseudoinverse of $X$. Then $\ridgeest\to \ridgeest[0]$ holds as $\lambda\downarrow 0$. Correspondingly, we assume $\beta_0=X^+X\beta_0$, that is, we select the vector $\beta_0$ in model~\eqref{EqLinModel} which  minimises $\norm{\beta}$ among all solutions to $X\beta=X\beta_0$.
In terms of the empirical covariance matrix $\Sigmahat=\frac1n X^\top X$ and its penalised version $\Sigmahatlambda=\Sigmahat+\lambda I_p$, the ridge objective~\eqref{EqRidgeProblem} can be written as
\begin{equation*}
    \emprisk{\beta} = \tfrac12\scapro{\Sigmahatlambda \beta}{\beta}-\tfrac1n\scapro{X^\top y}{\beta}+\tfrac{1}{2n} \norm{y}^2
    =\tfrac12\norm{\Sigmahatlambda^{1/2}\beta-y_\lambda}^2+\tfrac{1}{2n}\norm{y}^2-\tfrac12\norm{y_\lambda}^2
\end{equation*}
with (for $\lambda=0$ we always let $\Sigmahatlambda^{-1}=\widehat\Sigma_0^{-1}=\Sigmahat^+$)
\begin{equation*}
    y_\lambda=\tfrac1n \Sigmahatlambda^{-1/2}X^\top y=\Sigmahatlambda^{1/2}\beta_\lambda+\eps_\lambda,\quad \beta_\lambda=\Sigmahatlambda^{-1}\Sigmahat \beta_0,\quad \eps_\lambda=\tfrac1n \Sigmahatlambda^{-1/2}X^\top \eps.
\end{equation*}
Hence, $\ridgeest$ solves the normal equations
\begin{equation}
    \Sigmahatlambda \beta=\Sigmahatlambda^{1/2}y_\lambda \quad \text{such that} \quad \ridgeest=\beta_\lambda+\Sigmahatlambda^{-1/2}\eps_\lambda.\label{EqbetaRR}
\end{equation}
Inserting the definitions, we find back the classical formula $\ridgeest=\Sigmahatlambda^{-1}\tfrac1n X^\top y$.
We see in \eqref{EqbetaRR} the well-known fact that a penalty $\lambda>0$  introduces a bias $\beta_\lambda-\beta_0$, yet stabilises the stochastic error due to $\Sigmahatlambda> \Sigmahat$ (in the sense that $\Sigmahatlambda-\Sigmahat$ is positive definite).

The standard gradient descent (GD) algorithm with  \emph{step size} (or \emph{learning rate}) $\eta > 0$, initialised at $\ridgegdest{0}=0$, calculates iteratively for $k\in\N$
\begin{equation*}
    \ridgegdest{k} \coloneqq{} \ridgegdest{k-1} - \eta \nabla \emprisk{\ridgegdest{k-1}}
    = \ridgegdest{k-1} - \eta\big(\Sigmahatlambda \ridgegdest{k-1}-\Sigmahatlambda^{1/2}y_\lambda\big). \label{EqDefGD}
\end{equation*}
For a vanishing step size $\eta_k = t/k$ with $k \to \infty$, $t>0$, the  GD estimators $\ridgegdest[\eta_k]{k}$ converge to the \emph{gradient flow (GF) estimator} $\pengradflowest{t}$ satisfying
\begin{equation*}
        \pengradflowest{0}=0,\quad \tfrac{d}{dt} \pengradflowest{t} =  - \nabla \emprisk{\pengradflowest{t}}=-\big(\Sigmahatlambda \pengradflowest{t}-\Sigmahatlambda^{1/2}y_\lambda\big), \quad t\ge 0.
\end{equation*}
Solving this equation explicitly, we obtain the family of gradient flow estimators for $t\ge 0$
\begin{equation}
    \pengradflowest{t} = \Sigmahatlambda^{-1/2}\big(\id_p-\filterGF(\Sigmahatlambda)\big) y_{\lambda} \quad \text{with} \quad \filterGF(x)=\exp(-tx). \label{EqDefPenGF}
\end{equation}

  \begin{algorithm}[t]
    \caption{Penalised CG method}
    \label{AlgCG}
    \begin{algorithmic}[1]
      \vspace{2px}
      \State $\ridgecgest{0} \gets 0$, $q_0 \gets \tfrac{1}{n} X^{\top}y$, $d_0 \gets q_0$, $e_0 \gets X d_0$, $k=1$
      \vspace{2px}
      \While{(no division by zero)}
        \vspace{2px}
        \State $a_k \gets \norm{q_{k-1}}^2 / (\tfrac{1}{n} \norm{e_{k-1}}^2 + \lambda \norm{d_{k-1}}^2)$
        \vspace{2px}
        \State $\ridgecgest{k} \gets \ridgecgest{k-1} + a_k d_{k-1}$
        \vspace{2px}
        \State $q_{k} \gets q_{k-1} - \tfrac{a_k}{n} X^{\top} e_{k-1} - \lambda a_k d_{k-1}$
        \vspace{2px}
        \State $b_k \gets \norm{q_{k}}^2/\norm{q_{k-1}}^2$
        \vspace{2px}
        \State $d_{k} \gets q_{k} + b_k d_{k-1}$
        \vspace{2px}
        \State $e_{k} \gets X d_k$
        \vspace{2px}
        \State $k \gets k+1$
        \vspace{2px}
      \EndWhile
      \vspace{2px}
  \end{algorithmic}
  \end{algorithm}

A usually much faster method to minimise the objective $\emprisk{\beta}$ is conjugate gradients, as described in \cref{AlgCG}, which only requires the input $(y,X,\lambda)$ as well.
The iterations $\ridgecgest{k}$, $k=0,1,2,\ldots$, define the \emph{CG estimators}. Mathematically, they can be expressed in closed form \citep[Minimizing property~II]{PhaHoo2002} as
\begin{equation*}
    \ridgecgest{k} = \Sigmahatlambda^{-1/2} (\id_p - \filterCG[k](\Sigmahatlambda)) y_{\lambda}
\end{equation*}
with \emph{residual polynomial}
\begin{equation}
    \filterCG[k] \coloneqq \argmin_{P_k} \, \norm{P_k(\Sigmahatlambda) y_\lambda}^2, \label{EqDefResPol}
\end{equation}
where the minimiser is taken over all polynomials $P_k$ of degree $k$ with $P_k(0)=1$.
The CG algorithm is initialised at $\ridgecgest{0}=0$, which is natural for the ridge regression problem~\eqref{EqRidgeProblem} with penalty $\frac{\lambda}2\norm{\beta}^2$. For a penalty of the form $\frac{\lambda}2\norm{\beta-\bar\beta}^2$ the iterations would start in $\bar\beta$. Then shifting all parameter values by $\bar\beta$ leads back to our subsequent analysis, compare \citet[Section~7]{EngHanNeu1996}.

Let $s_1\ge\cdots\ge s_p\ge 0$ denote the $p$ eigenvalues (with multiplicities) of $\Sigmahat$ and $v_1,\ldots,v_p\in\R^p$ the corresponding orthonormal basis of eigenvectors. If there are $\tilde p$  distinct eigenvalues of $\Sigmahatlambda$, then there is at least one polynomial $P_{\tilde p}$ of degree $\tilde p$ with value one at zero and vanishing at these eigenvalues so that $\norm{P_{\tilde p}(\Sigmahatlambda) y_\lambda}=0$. Thus, the CG algorithm stops at $k=\tilde p$ at the latest and $\ridgecgest{\tilde p}=\ridgeest$. To avoid indeterminacies before (compare the stopping criterion in \cref{AlgCG}), we assume throughout the paper
\begin{equation*}
    \sum_{j=1,\dots,p: \, s_j = s_i} \scapro{X^\top y}{v_j}^2 > 0, \quad i=1,\dots,p. \label{AssXTy}
\end{equation*}
This holds almost surely, for instance, when the $\eps_i$ have a Lebesgue density.

We extend the CG iterates to the continuous iteration path $[0,\tilde{p}]$ by interpolating linearly between the residual polynomials, as proposed by \citet{HucRei2025}. For $t=k+\alpha$, $k=0,\dots,\tilde{p}-1$, $\alpha \in (0,1]$, the \emph{interpolated CG estimator} is given by
\begin{equation}
    \ridgecgest{t} \coloneqq \Sigmahatlambda^{-1/2} (\id_p - \filterCG(\Sigmahatlambda))y_{\lambda} = (1-\alpha) \ridgecgest{k} + \alpha \ridgecgest{k+1}\label{EqDefInterpolCG}
\end{equation}
with
\begin{equation}
    \filterCG \coloneqq (1-\alpha)\filterCG[k] + \alpha \filterCG[k+1]. \label{EqDefInterpolResPol}
\end{equation}
By \citet[Lemma 4.5]{HucRei2025}, the residual polynomial $\filterCG$ can be rewritten as
\begin{equation*}
\filterCG(x)=\prod_{i=1}^{\ceil{t}} \Big(1-\frac{x}{x_{i,t}}\Big),
\end{equation*}
where $0 < x_{1,t} < \dots < x_{\ceil{t},t} \le s_1+\lambda$ denote the data-dependent zeros of $\filterCG$.

Comparing \eqref{EqDefInterpolCG} and \eqref{EqDefPenGF}, the residual polynomials $\filterCG$ play the same role as the exponentials $\filterGF=\exp(-t \cdot)$ in the gradient flow estimators. The important difference is that $R_t^{\mathrm{CG}}$ depend on $y$ via \eqref{EqDefResPol}. The CG estimators are non-linear in $y$, while gradient flow is  a classical linear regularisation method.  It is remarkable that a rather precise error analysis for conjugate gradients is feasible and allows to compare the errors between the CG and GF regularisation paths.

Finally, by \eqref{EqbetaRR}, we can also express the family of ridge estimators $\hat\beta_{\lambda'}$ for general penalties $\lambda'>0$ in terms of a residual filter applied to $y_\lambda$:
\begin{equation} \label{EqDefRRfilter}
\ridgeest[\lambda'] =\widehat\Sigma_{\lambda'}^{-1}\widehat\Sigma_{\lambda}\ridgeest= \Sigmahatlambda^{-1/2}(I_p-\filterRR(\Sigmahatlambda)) y_{\lambda} \quad \text{with} \quad \filterRR(x) \coloneqq \tfrac{\lambda'-\lambda}{\lambda'-\lambda+x},
\end{equation}
noting $\filterRR(\Sigmahatlambda)=\widehat\Sigma_{\lambda'}^{-1}(\widehat\Sigma_{\lambda'}-\Sigmahatlambda)$.

\section{Error analysis} \label{SecErrorAnalysis}

\subsection{Error of gradient flow and ridge regression} \label{SecErrorMeasures}

To assess the prediction performance of the estimators, we first concentrate on the in-sample prediction errors. A natural criterion is the {\it pure  prediction error}
\begin{equation*}
\tfrac1n\norm{X(\hat\beta-\beta_0)}^2=\norm{\Sigmahat^{1/2}(\hat\beta-\beta_0)}^2
\end{equation*}
for an estimator $\hat\beta$. Inherent to the methodology, however, is the population risk
\begin{equation}\label{EqPopRisk}
 \beta\mapsto \E[\emprisk{\beta}\,|\,X]=\tfrac12\norm{\Sigmahatlambda^{1/2}(\beta-\beta_\lambda)}^2
+\tfrac12\sigma^2+\tfrac12\norm{\Sigmahat^{1/2}\beta_0}^2-\tfrac12\norm{\Sigmahatlambda^{1/2}\beta_\lambda}^2,
\end{equation}
which suggests to measure the error of $\hat\beta$ by the {\it excess penalised prediction error}
\begin{equation*}
\norm{\Sigmahatlambda^{1/2}(\hat\beta-\beta_\lambda)}^2=\tfrac1n\norm{X(\hat\beta-\beta_\lambda)}^2+\lambda\norm{\hat\beta-\beta_\lambda}^2.
\end{equation*}
This is also in line with standard approaches to analyse estimators under penalisation, compare for instance the fundamental inequality for Lasso \citep[Lemma~6.1]{BueGee2011}.

The main handle for analysing conjugate gradient errors are orthogonality properties of the residual polynomial $\filterCG$ with respect to a scalar product induced by the underlying matrix $\Sigmahatlambda$. The inherent CG geometry therefore confines us to consider the family of \emph{regularised in-sample prediction losses}
\begin{equation}\label{EqLin}
    \regpredloss{\hat{\beta}} \coloneqq \norm{\Sigmahatlambda^{1/2}(\hat{\beta}-\gamma)}^2=\tfrac1n\norm{X(\hat\beta-\gamma)}^2
    +\lambda\norm{\hat\beta-\gamma}^2
\end{equation}
for deterministic target vectors $\gamma\in\R^p$. The corresponding {\it prediction risks} are given by
\begin{equation}\label{EqRin}
 \regpredrisk{\hat{\beta}} \coloneqq \E[\regpredloss{\hat{\beta}}\,|\,X].
 \end{equation}
This generalises the excess penalised prediction error for $\gamma=\beta_\lambda$ and gives an upper bound on the pure prediction error for $\gamma=\beta_0$. Further natural target vectors will be $\gamma=\beta_{\lambda'}$, the conditional expectation of the ridge estimator $\ridgeest[\lambda']$ for general penalties $\lambda'>0$, which generalise the first two cases.

\begin{proposition}[Standard error decomposition]\label{PropErrDecomp}
Consider for any residual filter function $R:[0,\infty)\to\R$ the estimator
\begin{equation*}
\hat\beta \coloneqq \Sigmahatlambda^{-1/2}(I_p-R(\Sigmahatlambda))y_\lambda.
\end{equation*}
Then the penalised prediction loss satisfies
    \begin{equation*}
        \regpredloss{\hat\beta} = A_{\lambda,\gamma}(R) + S_{\lambda}(R) - 2C_{\lambda,\gamma}(R)\label{EqRidgePredLossGFDecomp}
    \end{equation*}
with approximation, stochastic and cross term errors
\begin{align*}
A_{\lambda,\gamma}(R)&\coloneqq \norm{\Sigmahatlambda^{1/2}(R(\Sigmahatlambda)\beta_\lambda+(\gamma-\beta_\lambda))}^2,\\ S_{\lambda}(R)&\coloneqq \norm{(I_p-R(\Sigmahatlambda))\eps_\lambda}^2,\\
C_{\lambda,\gamma}(R)&\coloneqq \scapro{\Sigmahatlambda^{1/2}(R(\Sigmahatlambda)\beta_\lambda+(\gamma-\beta_\lambda))}
{(I_p-R(\Sigmahatlambda))\eps_\lambda}.
\end{align*}
If $R$ is deterministic, then the corresponding risk of $\hat\beta=\Sigmahatlambda^{-1/2}(I_p-R(\Sigmahatlambda))y_\lambda$ is
\begin{equation*}
\regpredrisk{\hat\beta}=A_{\lambda,\gamma}(R) + \tfrac{\sigma^2}{n}\trace\big((I_p-R(\Sigmahatlambda))^2\Sigmahatlambda^{-1}\Sigmahat\big).
\end{equation*}
\end{proposition}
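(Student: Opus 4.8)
The plan is to decompose $\hat\beta-\gamma$ into a deterministic approximation part and a stochastic part, and then expand the $\Sigmahatlambda$-weighted squared norm. First I would substitute the representation $y_\lambda=\Sigmahatlambda^{1/2}\beta_\lambda+\eps_\lambda$ into the definition of $\hat\beta$. Exploiting functional calculus, so that $R(\Sigmahatlambda)$ commutes with every power of $\Sigmahatlambda$ and $I_p-R(\Sigmahatlambda)$ is symmetric, the term $\Sigmahatlambda^{-1/2}(I_p-R(\Sigmahatlambda))\Sigmahatlambda^{1/2}\beta_\lambda$ collapses to $(I_p-R(\Sigmahatlambda))\beta_\lambda$, which yields
\begin{equation*}
\hat\beta-\gamma = -\big(R(\Sigmahatlambda)\beta_\lambda+(\gamma-\beta_\lambda)\big) + \Sigmahatlambda^{-1/2}(I_p-R(\Sigmahatlambda))\eps_\lambda.
\end{equation*}

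Second, I would apply $\Sigmahatlambda^{1/2}$ to this identity. Since $\Sigmahatlambda^{1/2}$ cancels the leading $\Sigmahatlambda^{-1/2}$ in the stochastic part, one obtains $\Sigmahatlambda^{1/2}(\hat\beta-\gamma)=-a+b$ with $a\coloneqq\Sigmahatlambda^{1/2}(R(\Sigmahatlambda)\beta_\lambda+(\gamma-\beta_\lambda))$ and $b\coloneqq(I_p-R(\Sigmahatlambda))\eps_\lambda$. Expanding the squared Euclidean norm via $\norm{-a+b}^2=\norm{a}^2+\norm{b}^2-2\scapro{a}{b}$ then identifies $\norm{a}^2=A_{\lambda,\gamma}(R)$, $\norm{b}^2=S_\lambda(R)$ and $\scapro{a}{b}=C_{\lambda,\gamma}(R)$, which is exactly the stated loss decomposition.

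For the risk, conditioning on $X$ makes $R(\Sigmahatlambda)$, $\beta_\lambda$ and $\gamma$ deterministic, so $A_{\lambda,\gamma}(R)$ passes through the conditional expectation unchanged. Because $\eps_\lambda=\tfrac1n\Sigmahatlambda^{-1/2}X^\top\eps$ is linear in $\eps$ with $\E[\eps\,|\,X]=0$, the cross term vanishes, $\E[C_{\lambda,\gamma}(R)\,|\,X]=0$. The one computation requiring some care is the stochastic term: writing $M\coloneqq I_p-R(\Sigmahatlambda)$ (symmetric) and using $\E[\norm{M\eps_\lambda}^2\,|\,X]=\trace(M^2\,\E[\eps_\lambda\eps_\lambda^\top\,|\,X])$, I would insert $\E[\eps\eps^\top\,|\,X]=\sigma^2 I_n$, which follows from the i.i.d.\ structure and $\Var(\eps_i\,|\,x_i)=\sigma^2$, to get $\E[\eps_\lambda\eps_\lambda^\top\,|\,X]=\tfrac{\sigma^2}{n}\Sigmahatlambda^{-1/2}\Sigmahat\Sigmahatlambda^{-1/2}$. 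Finally, cyclicity of the trace together with the commutativity of $M^2$ and $\Sigmahatlambda^{-1/2}$ reduces the expression to $\tfrac{\sigma^2}{n}\trace((I_p-R(\Sigmahatlambda))^2\Sigmahatlambda^{-1}\Sigmahat)$. The main (and only mild) obstacle is keeping the functional-calculus commutations and the conditional second-moment bookkeeping correct; beyond that the argument is a routine bias--variance expansion.
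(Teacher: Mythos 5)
Your proposal is correct and is essentially the paper's own argument in expanded form: the paper's proof consists precisely of the remark that the decomposition follows from the definitions, that the cross term has conditional mean zero for deterministic $R$, and that $\E[\eps_\lambda\eps_\lambda^\top\,|\,X]=\tfrac{\sigma^2}{n}\Sigmahatlambda^{-1}\Sigmahat$, which are exactly the steps you carry out via the substitution $y_\lambda=\Sigmahatlambda^{1/2}\beta_\lambda+\eps_\lambda$ and the bias--variance expansion. Your bookkeeping (functional-calculus commutations, trace cyclicity) is accurate, so there is nothing to correct.
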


\begin{proof}
This follows directly from the definitions and the fact that the cross term has conditional mean zero when $R$ is deterministic. For the conditional variance term note $\E[\eps_\lambda\eps_\lambda^\top\,|\,X]=\frac{\sigma^2}{n}\Sigmahatlambda^{-1}\Sigmahat$.
\end{proof}

We can immediately compare the regularisation path of gradient flow in time $t> 0$ with that of ridge regression with penalties larger than $\lambda$. In particular, solving the ridge problem~\eqref{EqRidgeProblem} for some prespecified minimal $\lambda\ge 0$ via gradient flow yields an iteration path of regularised estimators with the same risk as ridge estimators with larger penalties, up to a small numerical factor. This generalises the result by \citet{AliKolTib2019} derived for $\lambda=0$, $\gamma=\beta_0$. To obtain the more general result, we need to require a geometric condition on the target vector $\gamma$ in the loss.

\begin{proposition}\label{PropGFRR}
If the target vector $\gamma$ satisfies
\begin{equation}\label{EqgammaCond1}
\scapro{\Sigmahatlambda ((I_p+t\Sigmahatlambda)^{-1}-\exp(-t\Sigmahatlambda))\beta_\lambda }{ \gamma-\beta_\lambda}\ge 0,
\end{equation}
then the prediction risk of gradient flow at time $t>0$ is bounded by that of ridge regression with penalty $\lambda+1/t$ via
\begin{equation*}
\regpredrisk{\pengradflowest{t}}\le 1.2985^2 \regpredrisk{\ridgeest[\lambda+1/t]}.
\end{equation*}
Condition~\eqref{EqgammaCond1} and the result hold in particular for all $\gamma=\beta_{\lambda'}$ with $\lambda'\in[0,\lambda]$.
\end{proposition}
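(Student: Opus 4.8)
The plan is to feed both estimators into the risk decomposition of \cref{PropErrDecomp} and to compare the approximation (bias) and stochastic (variance) terms separately, exploiting that gradient flow and ridge regression are both linear methods whose residual filters are explicit functions of $\Sigmahatlambda$. By \eqref{EqDefPenGF} gradient flow at time $t$ uses $\filterGF(x)=e^{-tx}$, whereas by \eqref{EqDefRRfilter} ridge regression with penalty $\lambda'=\lambda+1/t$ uses $\filterRR[\lambda,\lambda+1/t](x)=\tfrac{1/t}{1/t+x}=(1+tx)^{-1}$. Since every operator appearing in \cref{PropErrDecomp} is a function of $\Sigmahat$ and the operators therefore share the eigenbasis $v_1,\dots,v_p$, each term is a quadratic form or trace that is a spectral sum over the eigenvalues $x_i=s_i+\lambda$ of $\Sigmahatlambda$ with nonnegative weights. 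Hence pointwise inequalities between the two filters transfer directly to the matrix level, and the whole comparison reduces to two scalar estimates in the variable $u=tx\ge 0$.

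For the stochastic term $\tfrac{\sigma^2}{n}\trace\!\big((I_p-R(\Sigmahatlambda))^2\Sigmahatlambda^{-1}\Sigmahat\big)$ I need $(1-e^{-u})^2\le C^2\,(u/(1+u))^2$, i.e.\ $\phi(u):=\tfrac{(1+u)(1-e^{-u})}{u}\le C$. A short calculus argument shows that $\phi$ extends continuously to $\phi(0)=1$, tends to $1$ as $u\to\infty$, and has a unique interior maximiser solving $e^{u}=1+u+u^2$ (numerically $u^\ast\approx 1.793$), giving $\sup_{u\ge 0}\phi(u)=1.2985\ldots=:C$. This is exactly where the constant in the statement originates, and it is paid only on the variance.

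For the approximation term I use two facts. First, $(1+u)e^{-u}$ is decreasing from $1$, so $e^{-u}\le(1+u)^{-1}$ and thus $\filterGF(x)^2\le\filterRR[\lambda,\lambda+1/t](x)^2$ pointwise, which yields $\norm{\Sigmahatlambda^{1/2}\filterGF(\Sigmahatlambda)\beta_\lambda}^2\le\norm{\Sigmahatlambda^{1/2}\filterRR[\lambda,\lambda+1/t](\Sigmahatlambda)\beta_\lambda}^2$ (positive spectral weights). Second, expanding $A_{\lambda,\gamma}(R)=\norm{\Sigmahatlambda^{1/2}R(\Sigmahatlambda)\beta_\lambda}^2+2\scapro{\Sigmahatlambda R(\Sigmahatlambda)\beta_\lambda}{\gamma-\beta_\lambda}+\norm{\Sigmahatlambda^{1/2}(\gamma-\beta_\lambda)}^2$, the last summand is common to both filters and the cross-term difference is precisely $\scapro{\Sigmahatlambda(\filterRR[\lambda,\lambda+1/t]-\filterGF)(\Sigmahatlambda)\beta_\lambda}{\gamma-\beta_\lambda}\ge 0$, which is condition~\eqref{EqgammaCond1}. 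Together these give $A_{\lambda,\gamma}(\filterGF)\le A_{\lambda,\gamma}(\filterRR[\lambda,\lambda+1/t])\le C^2 A_{\lambda,\gamma}(\filterRR[\lambda,\lambda+1/t])$. Combining the bias and variance bounds through the decomposition of \cref{PropErrDecomp} then yields $\regpredrisk{\pengradflowest{t}}\le C^2\,\regpredrisk{\ridgeest[\lambda+1/t]}$ with $C=1.2985\ldots$.

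It remains to verify \eqref{EqgammaCond1} for $\gamma=\beta_{\lambda'}$ with $\lambda'\in[0,\lambda]$. Writing $\mu_i=\scapro{\beta_0}{v_i}$, both vectors $\Sigmahatlambda(\filterRR[\lambda,\lambda+1/t]-\filterGF)(\Sigmahatlambda)\beta_\lambda$ and $\gamma-\beta_\lambda=(\widehat\Sigma_{\lambda'}^{-1}-\Sigmahatlambda^{-1})\Sigmahat\beta_0$ have $i$-th coordinate equal to a nonnegative factor times $\mu_i$: the first because $(1+u)^{-1}-e^{-u}\ge 0$, the second because $\lambda'\le\lambda$ makes $\widehat\Sigma_{\lambda'}^{-1}-\Sigmahatlambda^{-1}$ positive semidefinite. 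Their inner product is therefore $\sum_i(\text{nonneg})\,\mu_i^2\ge 0$. The genuinely delicate point in the whole argument is the sharp scalar estimate $\sup_{u\ge 0}\phi(u)=1.2985\ldots$, which pins down the constant; everything else is routine once one recognises that \eqref{EqgammaCond1} is exactly the hypothesis needed to absorb the cross term at no cost, so that the constant is incurred only on the stochastic term, in the spirit of \citet{AliKolTib2019}.
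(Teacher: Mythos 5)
Your proof is correct and follows essentially the same route as the paper: compare approximation and stochastic terms via \cref{PropErrDecomp}, use the pointwise bounds $e^{-u}\le(1+u)^{-1}$ and $1-e^{-u}\le 1.2985\,\bigl(1-(1+u)^{-1}\bigr)$, absorb the cross term with Condition~\eqref{EqgammaCond1}, and check the condition coordinate-wise in the eigenbasis for $\gamma=\beta_{\lambda'}$. The only cosmetic difference is that you re-derive the scalar bound $\sup_{u\ge 0}(1+u)(1-e^{-u})/u\approx 1.2985$ by calculus, whereas the paper simply cites it as Lemma~7 of \citet{AliKolTib2019}.
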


\begin{proof}
By \citet[Lemma~7]{AliKolTib2019}, we have for $x,t\ge 0$
\begin{equation*}
\exp(-tx)\le (1+tx)^{-1},\quad 1-\exp(-tx)\le 1.2985(1-(1+tx)^{-1}).
\end{equation*}
This yields directly
\begin{align*}
S_{\lambda}(\filterGF)&=\norm{(I_p-\exp(-t\Sigmahatlambda))\eps_\lambda}^2 \\
&\le 1.2985^2\norm{(I_p-(I_p+t\Sigmahatlambda)^{-1})\eps_\lambda}^2= 1.2985^2 S_{\lambda}(\filterRR[\lambda,\lambda+1/t]).
\end{align*}
For the approximation error, we calculate
\begin{align*}
&A_{\lambda,\gamma}(\filterGF)-A_{\lambda,\gamma}(\filterRR[\lambda,\lambda+1/t])\\
&= \norm{\Sigmahatlambda^{1/2}\exp(-t\Sigmahatlambda)\beta_\lambda}^2 -\norm{\Sigmahatlambda^{1/2}(I_p+t\Sigmahatlambda)^{-1}\beta_\lambda}^2 \\
&\hphantom{={}} +2\scapro{\Sigmahatlambda(\gamma-\beta_\lambda) }{(\exp(-t\Sigmahatlambda)-(I_p+t\Sigmahatlambda)^{-1})\beta_\lambda} \\
&\le -2\scapro{\Sigmahatlambda ((I_p+t\Sigmahatlambda)^{-1}-\exp(-t\Sigmahatlambda))\beta_\lambda }{ \gamma-\beta_\lambda}\le 0
\end{align*}
by Condition~\eqref{EqgammaCond1}. This gives the bound.

For $\gamma=\beta_{\lambda'}=\widehat\Sigma_{\lambda'}^{-1}\Sigmahatlambda \beta_\lambda$ we infer $\gamma-\beta_\lambda=(\lambda-\lambda')\widehat\Sigma_{\lambda'}^{-1} \beta_\lambda$ and Condition~\eqref{EqgammaCond1} becomes
\begin{equation*}
(\lambda-\lambda')\scapro{\Sigmahatlambda ((I_p+t\Sigmahatlambda)^{-1}-\exp(-t\Sigmahatlambda))\beta_\lambda }{\widehat\Sigma_{\lambda'}^{-1} \beta_\lambda}\ge 0.
\end{equation*}
Since $\widehat\Sigma_{\lambda'}^{-1}$ and $\Sigmahatlambda ((I_p+t\Sigmahatlambda)^{-1}-\exp(-t\Sigmahatlambda))$ are positive semi-definite for  $t\ge 0$ and are jointly diagonalised by the eigenvectors of $\Sigmahat$, Condition~\eqref{EqgammaCond1} is satisfied for $\lambda-\lambda'\ge 0$.
\end{proof}

\subsection{Error of conjugate gradients}

For conjugate gradients, the data-dependent residual polynomial $\filterCG$ cannot be controlled globally and we shall use \cref{PropErrDecomp} only for the RR and GF estimators. For CG we pursue a different error decomposition.

Recall that we denote by $x_{1,t}$ the smallest zero on $[0,\infty)$ of the $t$-th residual polynomial $\filterCG$, $t \in (0,\tilde{p}]$. For $x\ge 0$ introduce the functions
\begin{equation}
\filterCG[t,<](x) \coloneqq \filterCG(x){\bf 1}(x<x_{1,t}), \quad
\filterCG[t,>](x) \coloneqq \filterCG(x){\bf 1}(x>x_{1,t}) \label{EqDefTruncatedResPols}
\end{equation}
and set $\filterCG[0,<] \coloneqq \filterCG[0] \equiv 1$, $\filterCG[0,>] \equiv 0$.
With this we can transfer the explicit error decomposition for conjugate gradients from \citet[Proposition~5.2]{HucRei2025}.

\begin{proposition}\label{PropErrorDecomps}
        The penalised CG estimator at $t \in [0,\tilde{p}]$ satisfies for $\gamma=\beta_\lambda$
    \begin{equation}
        \ridgepredloss{\ridgecgest{t}} = \ridgeapproxerrorCG{t} + \ridgestocherrorCG{t} - 2\ridgecrossCG{t} \le 2\ridgeapproxerrorCG{t} + 2\ridgestocherrorCG{t} \label{EqCGDecomp}
    \end{equation}
    with approximation, stochastic  and cross term errors given by
    \begin{align}
        \ridgeapproxerrorCG{t} &\coloneqq \norm{\Sigmahatlambda^{1/2}\filterCG[t,<](\Sigmahatlambda)^{1/2}\beta_{\lambda}}^2 + \norm{\filterCG(\Sigmahatlambda)y_{\lambda}}^2 - \norm{\filterCG[t,<](\Sigmahatlambda)^{1/2}y_{\lambda}}^2 \notag
        \\
        &\le \norm{\Sigmahatlambda^{1/2}\filterCG[t,<](\Sigmahatlambda)^{1/2}\beta_{\lambda}}^2, \label{EqRidgePredLossApproxErrorCGBounds} \\
        \ridgestocherrorCG{t} &\coloneqq \norm{(I_p-\filterCG[t,<](\Sigmahatlambda))^{1/2}\varepsilon_{\lambda}}^2, \label{EqRidgePredLossStochErrorCGBounds} \quad
        \ridgecrossCG{t} \coloneqq \scapro{\filterCG[t,>](\Sigmahatlambda)y_{\lambda}}{\varepsilon_{\lambda}}.
    \end{align}
\end{proposition}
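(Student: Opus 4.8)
The plan is to obtain \cref{PropErrorDecomps} by transferring \citet[Proposition~5.2]{HucRei2025} to the present regression setting. First I would set up the dictionary between the two frameworks: writing the whitened data as $y_\lambda=\Sigmahatlambda^{1/2}\beta_\lambda+\eps_\lambda$, the penalised CG recursion of \cref{AlgCG} is exactly conjugate gradients applied to the symmetric positive operator $\Sigmahatlambda$ with right-hand side $\Sigmahatlambda^{1/2}y_\lambda$ (the normal equations \eqref{EqbetaRR}), so that $\Sigmahatlambda^{1/2}$, $\beta_\lambda$, $\eps_\lambda$ and $y_\lambda$ play the roles of the forward operator, the signal, the noise and the data in \citet{HucRei2025}, and $\ridgepredloss{\cdot}=\norm{\Sigmahatlambda^{1/2}(\cdot-\beta_\lambda)}^2$ is their prediction loss. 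The residual polynomial $\filterCG$ and its linear interpolation are the same objects (their Lemma~4.5), and the standing non-degeneracy assumption guarantees that CG runs until $\tilde p$ with well-defined Ritz values $x_{1,t}<\dots<x_{\ceil{t},t}$. Once this correspondence and the running assumptions are checked, Proposition~5.2 applies and yields both the identity and the stated bounds.

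To make the argument self-contained I would then re-derive the identity directly. Since $\ridgecgest{t}=\Sigmahatlambda^{-1/2}(I_p-\filterCG(\Sigmahatlambda))y_\lambda$ and $y_\lambda-\Sigmahatlambda^{1/2}\beta_\lambda=\eps_\lambda$, one has
\[
\Sigmahatlambda^{1/2}(\ridgecgest{t}-\beta_\lambda)=\eps_\lambda-\filterCG(\Sigmahatlambda)y_\lambda ,
\]
so that $\ridgepredloss{\ridgecgest{t}}=\norm{\eps_\lambda-\filterCG(\Sigmahatlambda)y_\lambda}^2$. Expanding the square and splitting $\filterCG=\filterCG[t,<]+\filterCG[t,>]$ (legitimate on the spectrum because $\filterCG(x_{1,t})=0$), the only nontrivial step is to regroup the $\filterCG[t,<]$-contributions by completing the square: using $y_\lambda=\Sigmahatlambda^{1/2}\beta_\lambda+\eps_\lambda$ and the self-adjointness of $\filterCG[t,<](\Sigmahatlambda)^{1/2}$, the combination $\norm{\Sigmahatlambda^{1/2}\filterCG[t,<](\Sigmahatlambda)^{1/2}\beta_\lambda}^2-\norm{\filterCG[t,<](\Sigmahatlambda)^{1/2}y_\lambda}^2-\norm{\filterCG[t,<](\Sigmahatlambda)^{1/2}\eps_\lambda}^2$ collapses to $-2\scapro{\filterCG[t,<](\Sigmahatlambda)y_\lambda}{\eps_\lambda}$, which combined with $-2\ridgecrossCG{t}=-2\scapro{\filterCG[t,>](\Sigmahatlambda)y_\lambda}{\eps_\lambda}$ reconstitutes $-2\scapro{\filterCG(\Sigmahatlambda)y_\lambda}{\eps_\lambda}$. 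This part is pure algebra and uses no property of CG beyond the additive split of the filter.

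The CG-specific content lies in the two inequalities. For the bound $\ridgeapproxerrorCG{t}\le\norm{\Sigmahatlambda^{1/2}\filterCG[t,<](\Sigmahatlambda)^{1/2}\beta_\lambda}^2$ it suffices to show $\norm{\filterCG(\Sigmahatlambda)y_\lambda}^2\le\norm{\filterCG[t,<](\Sigmahatlambda)^{1/2}y_\lambda}^2=\scapro{\filterCG[t,<](\Sigmahatlambda)y_\lambda}{y_\lambda}$. Here I would invoke the minimising property \eqref{EqDefResPol}: for integer $t=k$ it makes $\filterCG(\Sigmahatlambda)y_\lambda$ the projection residual of $y_\lambda$ onto the Krylov space, hence orthogonal to $S(\Sigmahatlambda)y_\lambda$ for every polynomial $S$ of degree $\le k$ with $S(0)=0$; taking $S=1-\filterCG$ gives $\norm{\filterCG(\Sigmahatlambda)y_\lambda}^2=\scapro{\filterCG(\Sigmahatlambda)y_\lambda}{y_\lambda}$, so the claim reduces to the sign inequality $\scapro{\filterCG[t,>](\Sigmahatlambda)y_\lambda}{y_\lambda}\le 0$. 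The remaining bound $\le 2\ridgeapproxerrorCG{t}+2\ridgestocherrorCG{t}$ amounts to the cross-term control $2\abs{\ridgecrossCG{t}}\le\ridgeapproxerrorCG{t}+\ridgestocherrorCG{t}$, again drawn from \citet{HucRei2025}.

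The \emph{hard part} is the sign inequality $\scapro{\filterCG[t,>](\Sigmahatlambda)y_\lambda}{y_\lambda}\le 0$. It is genuinely not a pointwise spectral statement: on $\{x>x_{1,t}\}$ the residual polynomial $\filterCG$ changes sign at each of its larger roots, so the inequality only holds after summing against the spectral weights $\scapro{y_\lambda}{v_j}^2$ and must exploit the interlacing and Gauss-quadrature structure of the Ritz values rather than positivity of the integrand. A second subtlety is that for non-integer $t$ the interpolated $\filterCG=(1-\alpha)\filterCG[k]+\alpha\filterCG[k+1]$ no longer satisfies the clean orthogonality $\norm{\filterCG(\Sigmahatlambda)y_\lambda}^2=\scapro{\filterCG(\Sigmahatlambda)y_\lambda}{y_\lambda}$, so the integer-index argument has to be combined with the interpolation of \citet[Lemma~4.5]{HucRei2025}. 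These two points are exactly the technical core encapsulated in the cited Proposition~5.2.
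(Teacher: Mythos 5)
Your proposal is correct and follows essentially the same route as the paper: the paper gives no separate proof of \cref{PropErrorDecomps} but obtains it exactly as you do, by transferring \citet[Proposition~5.2]{HucRei2025} to the penalised setting via the dictionary you describe (forward operator $\Sigmahatlambda^{1/2}$, data $y_\lambda$, signal $\beta_\lambda$, noise $\eps_\lambda$). Your supplementary re-derivation of the algebraic identity is correct, and you rightly identify the genuinely CG-specific content --- the sign inequality $\scapro{\filterCG[t,>](\Sigmahatlambda)y_\lambda}{y_\lambda}\le 0$ behind Nemirovskii's trick and its extension to interpolated non-integer $t$, as well as the cross-term bound --- as residing in the cited Proposition~5.2, which is precisely where the paper leaves it as well.
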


\begin{figure}
    \centering
    \includegraphics[width=\textwidth]{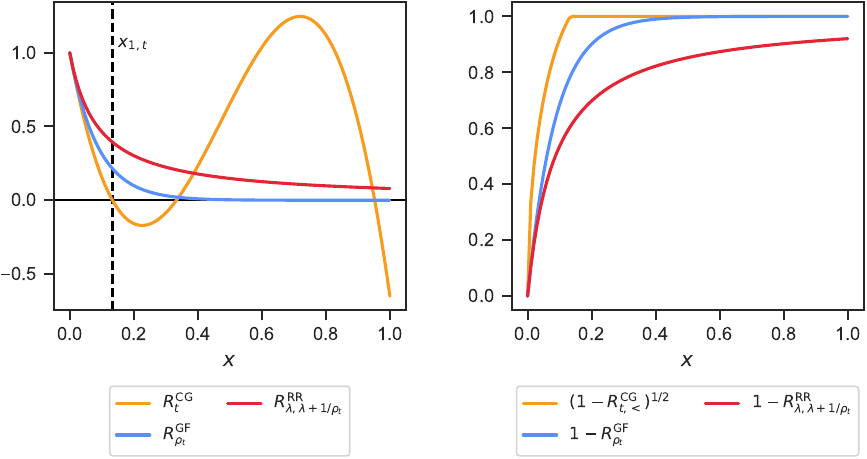}
    \caption{Left: Filters $\filterCG$ (orange), $\filterGF[\rho_t]$ (blue), $\filterRR[\lambda,\lambda+1/\rho_t]$ (red) and the first zero $x_{1,t}$ of $\filterCG$ (dashed vertical). Right: Filters $(1-\filterCG[t,<])^{1/2}$ (orange), $1-\filterGF[\rho_t]$ (blue), $1-\filterRR[\lambda,\lambda+1/\rho_t]$ (red) for the stochastic errors.}
    \label{FigResidualPolynomial}
\end{figure}

The two bounds appearing in \cref{PropErrorDecomps} provide key insights for the further analysis. First, the cross term $\ridgecrossCG{t}$ does not have mean zero and cannot be controlled directly due to the randomness of all quantities involved. We shall therefore work with the bound in \eqref{EqCGDecomp}, at the cost of a factor $2$. Second, {\it Nemirovskii's trick}~\cite[(3.14)]{Nem1986} yields the inequality in \eqref{EqRidgePredLossApproxErrorCGBounds} and thus reduces the control of the approximation error to the behaviour of the residual polynomial $\filterCG$ on $[0,x_{1,t}]$ at the cost of replacing the residual filters $\filterCG,1-\filterCG$ by $(\filterCG[t,<])^{1/2},(1-\filterCG[t,<])^{1/2}$ in the approximation and stochastic error terms, respectively.

\cref{FigResidualPolynomial} illustrates the difficulty in analysing the CG residual polynomial globally. The left symbolic plot shows a CG residual polynomial $\filterCG$  and the corresponding residual filters $\filterGF[\rho_t]$, $\filterRR[\lambda,\lambda+1/\rho_t]$  for gradient flow and ridge regression, respectively, with $\rho_t \coloneqq \abs{(\filterCG)'(0)}$. $\filterCG(x)$ is up to its first zero $x_{1,t}$ upper bounded by  $\filterGF[\rho_t](x)$, but afterwards it may fluctuate strongly, taking also values outside of $[0,1]$. Indeed, \citet[Lemma~4.7]{HucRei2025} state that the residual polynomial $\filterCG$ is tightly bounded on $[0,x_{1,t}]$ in terms of its derivative at zero:
\begin{equation}
(1-\rho_t x)_+ \le \filterCG(x) \le \exp(-\rho_t x), \quad x \in [0,x_{1,t}]. \label{EqRtbounds}
\end{equation}
The right plot displays the filter functions that appear in the stochastic error terms. For CG $(1-\filterCG[t,<](x))^{1/2}$ grows like $(\rho_t x)^{1/2}$ for $x$ near zero, while $1-\filterGF[\rho_t]$ and $1-\filterRR[\lambda,\lambda+1/\rho_t]$ grow linearly like $\rho_t x$. \cref{PropErrorDecomps} and the bounds in \eqref{EqRtbounds} yield already a more explicit universal bound.

\begin{corollary}\label{CorCGloss1}
        The penalised CG estimator at $t \in [0,\tilde{p}]$ satisfies for $\gamma=\beta_\lambda$
    \begin{equation*}
        \ridgepredloss{\ridgecgest{t}}  \le 2\barridgeapproxerrorCG{t} + 2\barridgestocherrorCG{t}
    \end{equation*}
    with approximation and stochastic  error bounds given by
    \begin{equation*}
        \barridgeapproxerrorCG{t} \coloneqq \norm{\Sigmahatlambda^{1/2}{\exp(-\rho_t\Sigmahatlambda /2)}\beta_{\lambda}}^2 , \quad
        \barridgestocherrorCG{t} \coloneqq \norm{(\rho_t \Sigmahatlambda\wedge 1)^{1/2}\varepsilon_{\lambda}}^2 \quad \text{for} \quad \rho_t=\abs{(\filterCG)'(0)},
    \end{equation*}
    where $(\rho_t \Sigmahatlambda\wedge 1)^{1/2}$ denotes the matrix $f(\Sigmahatlambda)$ for $f(x)=(\rho_t x\wedge 1)^{1/2}$, $x\ge 0$.
\end{corollary}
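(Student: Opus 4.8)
The plan is to start from the bound $\ridgepredloss{\ridgecgest{t}} \le 2\ridgeapproxerrorCG{t} + 2\ridgestocherrorCG{t}$ established in \cref{PropErrorDecomps}, together with the approximation-error estimate in \eqref{EqRidgePredLossApproxErrorCGBounds}, and to replace the two data-dependent filter functionals by the explicit ones defining $\barridgeapproxerrorCG{t}$ and $\barridgestocherrorCG{t}$. The single tool needed throughout is functional calculus: since $\Sigmahatlambda$ is symmetric and positive (semi)definite, a pointwise inequality $f(x)\le g(x)$ between scalar functions valid on the spectrum of $\Sigmahatlambda$ implies $\scapro{f(\Sigmahatlambda)v}{v}\le\scapro{g(\Sigmahatlambda)v}{v}$ for every vector $v$, because $(g-f)(\Sigmahatlambda)$ is then positive semidefinite. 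Hence it suffices to establish two scalar inequalities on $[0,\infty)$ and apply them to $v=\beta_\lambda$ and $v=\varepsilon_\lambda$, respectively.

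For the approximation term I would argue as follows. By definition $\filterCG[t,<](x)=\filterCG(x){\bf 1}(x<x_{1,t})$ vanishes for $x\ge x_{1,t}$, while on $[0,x_{1,t})$ the upper bound in \eqref{EqRtbounds} gives $0\le\filterCG(x)\le\exp(-\rho_t x)$. Together this yields $0\le\filterCG[t,<](x)\le\exp(-\rho_t x)$ for all $x\ge 0$. Multiplying by $x\ge 0$ preserves the inequality, so $x\,\filterCG[t,<](x)\le x\exp(-\rho_t x)$ on the spectrum. Writing $\norm{\Sigmahatlambda^{1/2}\filterCG[t,<](\Sigmahatlambda)^{1/2}\beta_\lambda}^2=\scapro{\Sigmahatlambda\filterCG[t,<](\Sigmahatlambda)\beta_\lambda}{\beta_\lambda}$, using that both factors are functions of $\Sigmahatlambda$ and hence commute, and applying functional calculus gives $\ridgeapproxerrorCG{t}\le\norm{\Sigmahatlambda^{1/2}\filterCG[t,<](\Sigmahatlambda)^{1/2}\beta_\lambda}^2\le\norm{\Sigmahatlambda^{1/2}\exp(-\rho_t\Sigmahatlambda/2)\beta_\lambda}^2=\barridgeapproxerrorCG{t}$.

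For the stochastic term the goal is the scalar inequality $1-\filterCG[t,<](x)\le\rho_t x\wedge 1$ on $[0,\infty)$. On $[0,x_{1,t})$ the lower bound in \eqref{EqRtbounds} gives $1-\filterCG(x)\le 1-(1-\rho_t x)_+=\rho_t x\wedge 1$, which is the desired estimate. The remaining region $x\ge x_{1,t}$ is the main obstacle, since there $\filterCG[t,<]$ is truncated to zero and the left-hand side equals the constant $1$, so I must certify $\rho_t x\wedge 1=1$, i.e.\ $\rho_t x\ge 1$. The key observation is that evaluating the lower bound in \eqref{EqRtbounds} at the first zero $x_{1,t}$, where $\filterCG(x_{1,t})=0$, forces $(1-\rho_t x_{1,t})_+=0$ and hence $\rho_t x_{1,t}\ge 1$; monotonicity then gives $\rho_t x\ge\rho_t x_{1,t}\ge 1$ for all $x\ge x_{1,t}$. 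Thus $1-\filterCG[t,<](x)\le\rho_t x\wedge 1$ holds on all of $[0,\infty)$, and functional calculus applied to $\varepsilon_\lambda$ yields $\ridgestocherrorCG{t}=\norm{(I_p-\filterCG[t,<](\Sigmahatlambda))^{1/2}\varepsilon_\lambda}^2\le\norm{(\rho_t\Sigmahatlambda\wedge 1)^{1/2}\varepsilon_\lambda}^2=\barridgestocherrorCG{t}$. Combining the two displays with \cref{PropErrorDecomps} completes the proof; the only further subtlety is ensuring all truncated filters stay in $[0,1]$ so that the square roots are well defined, which again follows from the bounds in \eqref{EqRtbounds}.
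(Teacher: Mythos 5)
Your proof is correct and follows exactly the route the paper intends: the corollary is stated there as an immediate consequence of \cref{PropErrorDecomps} combined with the bounds \eqref{EqRtbounds}, which is precisely your argument via functional calculus. Your explicit verification that $\filterCG(x_{1,t})=0$ forces $\rho_t x_{1,t}\ge 1$ (so that $\rho_t x\wedge 1=1$ on the truncated region) is the one detail the paper leaves implicit, and you handle it correctly.
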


We proceed to the case of general target vectors $\gamma$ and identify a geometric condition under which the results for the intrinsic parameter $\gamma=\beta_\lambda$ can be extended.

\begin{proposition}\label{PropCGlossgeneral}
Let $t \in [0,\tilde{p}]$. Assume that the target vector $\gamma\in\R^p$ satisfies  for $\rho_t=\abs{(\filterCG)'(0)}$
   \begin{equation}\label{gammacond}
     \scapro{\Sigmahatlambda\exp(-\rho_t\Sigmahatlambda /2)\beta_\lambda}{\gamma-\beta_\lambda}\ge 0.
    \end{equation}
Then the regularised in-sample prediction loss for the CG estimator is bounded by
    \begin{equation*}
        \regpredloss{\ridgecgest{t}}  \le 4\barapproxerrorCG{t} + 4\barstocherrorCG{t} \label{EqRidgePredLossCGDecomp}
    \end{equation*}
    with $\gamma$-dependent approximation error bound
    \begin{equation*}
        \barapproxerrorCG{t} \coloneqq \norm{\Sigmahatlambda^{1/2}(\beta_\lambda-\exp(-\rho_t\Sigmahatlambda /2)\beta_{\lambda}-\gamma)}^2. \label{EqRidgePredLossStochErrorCGBounds2}
    \end{equation*}
Condition~\eqref{gammacond} is always satisfied for $\gamma=\beta_{\lambda'}$, $\lambda'\in[0,\lambda]$.
\end{proposition}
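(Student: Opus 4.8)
The plan is to argue entirely pathwise and to reduce the general target $\gamma$ to the intrinsic target $\beta_\lambda$ already treated in \cref{CorCGloss1}, paying for the reduction with the deterministic displacement $\Sigmahatlambda^{1/2}(\beta_\lambda-\gamma)$ and then absorbing that displacement back into the $\gamma$-dependent approximation error by means of Condition~\eqref{gammacond}. First I would split
\begin{equation*}
\Sigmahatlambda^{1/2}(\ridgecgest{t}-\gamma)=\Sigmahatlambda^{1/2}(\ridgecgest{t}-\beta_\lambda)+\Sigmahatlambda^{1/2}(\beta_\lambda-\gamma)
\end{equation*}
and apply the elementary inequality $\norm{u+v}^2\le 2\norm{u}^2+2\norm{v}^2$. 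Invoking \cref{CorCGloss1} for the resulting $\beta_\lambda$-centred term then yields the pathwise estimate
\begin{equation*}
\regpredloss{\ridgecgest{t}}\le 2\,\ridgepredloss{\ridgecgest{t}}+2\norm{\Sigmahatlambda^{1/2}(\beta_\lambda-\gamma)}^2\le 4\barridgeapproxerrorCG{t}+4\barstocherrorCG{t}+2\norm{\Sigmahatlambda^{1/2}(\beta_\lambda-\gamma)}^2,
\end{equation*}
where I have used that the stochastic error bounds $\barridgestocherrorCG{t}$ and $\barstocherrorCG{t}$ coincide, since the stochastic error does not depend on the target vector.

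The heart of the argument is then a purely deterministic comparison of the two approximation errors. Expanding the square in $\barapproxerrorCG{t}=\norm{\Sigmahatlambda^{1/2}\exp(-\rho_t\Sigmahatlambda/2)\beta_\lambda-\Sigmahatlambda^{1/2}(\beta_\lambda-\gamma)}^2$ gives
\begin{equation*}
\barapproxerrorCG{t}=\barridgeapproxerrorCG{t}+\norm{\Sigmahatlambda^{1/2}(\beta_\lambda-\gamma)}^2+2\scapro{\Sigmahatlambda\exp(-\rho_t\Sigmahatlambda/2)\beta_\lambda}{\gamma-\beta_\lambda},
\end{equation*}
and Condition~\eqref{gammacond} is exactly the statement that the final scalar product is nonnegative. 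Hence $\barapproxerrorCG{t}\ge \barridgeapproxerrorCG{t}+\norm{\Sigmahatlambda^{1/2}(\beta_\lambda-\gamma)}^2$, so that $4\barridgeapproxerrorCG{t}+2\norm{\Sigmahatlambda^{1/2}(\beta_\lambda-\gamma)}^2\le 4\barapproxerrorCG{t}$. Substituting this into the pathwise estimate from the first step gives the claimed bound $\regpredloss{\ridgecgest{t}}\le 4\barapproxerrorCG{t}+4\barstocherrorCG{t}$, in fact with a little room to spare.

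It then remains to verify Condition~\eqref{gammacond} for $\gamma=\beta_{\lambda'}$ with $\lambda'\in[0,\lambda]$, which I would do exactly as at the end of the proof of \cref{PropGFRR}: writing $\beta_{\lambda'}=\widehat\Sigma_{\lambda'}^{-1}\Sigmahatlambda\beta_\lambda$ one finds $\gamma-\beta_\lambda=(\lambda-\lambda')\widehat\Sigma_{\lambda'}^{-1}\beta_\lambda$, so the scalar product in~\eqref{gammacond} equals $(\lambda-\lambda')\scapro{\widehat\Sigma_{\lambda'}^{-1}\Sigmahatlambda\exp(-\rho_t\Sigmahatlambda/2)\beta_\lambda}{\beta_\lambda}$; since $\widehat\Sigma_{\lambda'}^{-1}$ and $\Sigmahatlambda\exp(-\rho_t\Sigmahatlambda/2)$ are positive semidefinite and simultaneously diagonalised by the eigenbasis of $\Sigmahat$, their product is positive semidefinite and the quadratic form is nonnegative, while $\lambda-\lambda'\ge 0$. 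The main difficulty here is conceptual rather than computational: one has to recognise that Condition~\eqref{gammacond}, phrased in terms of the Nemirovskii surrogate filter $\exp(-\rho_t\Sigmahatlambda/2)$ instead of the globally uncontrollable CG residual polynomial $\filterCG$, is precisely the inequality that makes the triangle-inequality split lossless, by turning the otherwise harmful cross term in $\barapproxerrorCG{t}$ into a nonnegative contribution. This is what lets the general-$\gamma$ statement reduce cleanly to \cref{CorCGloss1} without ever having to revisit the delicate global behaviour of $\filterCG$.
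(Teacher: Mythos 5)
Your proposal is correct and follows essentially the same route as the paper: the triangle-inequality split around $\beta_\lambda$ combined with \cref{CorCGloss1}, then absorbing $2\norm{\Sigmahatlambda^{1/2}(\beta_\lambda-\gamma)}^2$ into $4\barapproxerrorCG{t}$ via the nonnegative cross term guaranteed by Condition~\eqref{gammacond}, and the same positive-semidefiniteness/joint-diagonalisation argument for $\gamma=\beta_{\lambda'}$. Your explicit expansion of $\barapproxerrorCG{t}$ merely spells out what the paper summarises as ``the approximation error is larger or equal to the sum of the squared norms of its parts.''
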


\begin{proof}
Using the general inequality $(A+B)^2\le 2(A^2+B^2)$, we obtain from \cref{CorCGloss1}
\begin{align*}
\regpredloss{\ridgecgest{t}} &= \norm{\Sigmahatlambda^{1/2}(\ridgecgest{t}-\beta_\lambda-(\gamma-\beta_\lambda))}^2\\
&\le 2\Big(2\norm{\Sigmahatlambda^{1/2}{\exp(-\rho_t\Sigmahatlambda /2)}\beta_{\lambda}}^2+ 2\norm{(\rho_t \Sigmahatlambda\wedge 1)^{1/2}\varepsilon_{\lambda}}^2+ \norm{\Sigmahatlambda^{1/2}(\gamma-\beta_\lambda)}^2\Big)\\
&\le 4\norm{\Sigmahatlambda^{1/2}(\exp(-\rho_t\Sigmahatlambda /2)\beta_\lambda+\gamma-\beta_\lambda)}^2+ 4\norm{(\rho_t \Sigmahatlambda\wedge 1)^{1/2}\varepsilon_{\lambda}}^2\\
&=4\barapproxerrorCG{t} + 4\barstocherrorCG{t},
\end{align*}
where Condition~\eqref{gammacond} was used in the last line to ensure that the approximation error is larger or equal to the sum of the squared norms of its parts.

For $\gamma=\beta_{\lambda'}=\widehat\Sigma_{\lambda'}^{-1}\Sigmahatlambda \beta_\lambda$, Condition~\eqref{gammacond} requires
\begin{equation*}
\scapro{\Sigmahatlambda\exp(-\rho_t\Sigmahatlambda /2)\beta_\lambda}{(\widehat\Sigma_{\lambda'}^{-1}\Sigmahatlambda-I_p)\beta_\lambda}
=(\lambda-\lambda')\scapro{\Sigmahatlambda\exp(-\rho_t\Sigmahatlambda /2)\beta_\lambda}{\widehat\Sigma_{\lambda'}^{-1}\beta_\lambda}
\end{equation*}
to be larger or equal to zero. Since $\Sigmahatlambda$ and $\widehat\Sigma_{\lambda'}$ are positive semi-definite and jointly diagonalisable, this is the case for any choice of $\lambda'\in[0,\lambda]$.
\end{proof}

The table in the \hyperref[TblNotation]{Appendix} summarises the key quantities introduced so far.

\subsection{Main results}

The main point in \cref{PropCGlossgeneral} is that the CG approximation error bound $\barapproxerrorCG{t}$ at time $t$ equals exactly the GF approximation error $A_{\lambda,\gamma}(\filterGF[\rho_t/2])$ at the random time $\rho_t/2$, where $\rho_t=\abs{(\filterCG)'(0)}$ is in principle computable from the data. Our aim is thus to compare the CG and GF errors up to a random time shift. Inverting the transformation $t\mapsto \rho_t/2$, we introduce  the random times
    \begin{equation*}
        \tau_t \coloneqq \inf \big\{ \tilde{t} \in [0,\tilde{p}] \, \big| \, \abs{(\filterCG[\tilde{t}])'(0)} \ge 2t \big\} \wedge \tilde{p},\quad t\ge 0.
    \end{equation*}
Note that the mapping $t \mapsto \abs{(\filterCG)'(0)}$ is strictly increasing, continuous and due to our CG interpolation scheme piecewise linear, which transfers directly to $t\mapsto\tau_t$ by inversion.

We obtain a first general comparison result for the  risk of CG. Note that generalisations to correlated errors $\eps_i$ are feasible, but the expected stochastic errors become much less explicit.

\begin{corollary}\label{CorCGGF1}
Under Condition~\eqref{gammacond} the prediction risk of CG at time $\tau_t$ satisfies
\begin{equation*}
\regpredrisk{\ridgecgest{\tau_t}}  \le 4A_{\lambda,\gamma}(\filterGF) + \tfrac{4\sigma^2}{n}\trace\big((2t\wedge \Sigmahatlambda^{-1})\Sigmahat\big).
\end{equation*}
\end{corollary}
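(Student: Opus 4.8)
The plan is to apply \cref{PropCGlossgeneral} at the data-dependent index $\tau_t$ and to exploit that the reparametrisation was engineered precisely so that the random derivative $\rho_{\tau_t} \coloneqq \abs{(\filterCG[\tau_t])'(0)}$ becomes \emph{deterministic} given $X$. Since $\tilde t \mapsto \rho_{\tilde t}$ is continuous and strictly increasing with $\rho_0=0$, the definition of $\tau_t$ gives $\rho_{\tau_t}=2t$ whenever $2t\le\rho_{\tilde p}$, and $\tau_t=\tilde p$ with $\rho_{\tilde p}<2t$ otherwise. Crucially, both $\tilde p$ and $\rho_{\tilde p}$ (the sum of the reciprocals of the distinct eigenvalues of $\Sigmahatlambda$, since $\filterCG[\tilde p]$ vanishes there and is independent of $y$) are $X$-measurable, so the dichotomy $2t\le\rho_{\tilde p}$ versus $2t>\rho_{\tilde p}$ is an $X$-measurable event. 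I would therefore condition on $X$ and split into these two cases.

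In the main case $2t\le\rho_{\tilde p}$ I substitute $\rho_{\tau_t}=2t$ into the bound of \cref{PropCGlossgeneral}. The approximation term then reads $\barapproxerrorCG{\tau_t}=A_{\lambda,\gamma}(\filterGF)$, which is $X$-measurable, while the stochastic term becomes $\barstocherrorCG{\tau_t}=\norm{(2t\Sigmahatlambda\wedge 1)^{1/2}\eps_\lambda}^2$ with an $X$-measurable coefficient matrix. This is the decisive simplification: although $\ridgecgest{\tau_t}$ depends on the noise through $\tau_t$ in a genuinely non-linear way, the upper bound depends on $\eps_\lambda$ only through a quadratic form with deterministic coefficient. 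Taking the conditional expectation and using $\E[\eps_\lambda\eps_\lambda^\top\,|\,X]=\tfrac{\sigma^2}{n}\Sigmahatlambda^{-1}\Sigmahat$ turns the stochastic term into $\tfrac{\sigma^2}{n}\trace((2t\Sigmahatlambda\wedge 1)\Sigmahatlambda^{-1}\Sigmahat)$, and the elementary functional-calculus identity $(2tx\wedge 1)x^{-1}=2t\wedge x^{-1}$ for $x>0$ rewrites this as $\tfrac{\sigma^2}{n}\trace((2t\wedge\Sigmahatlambda^{-1})\Sigmahat)$. The factor $4$ from \cref{PropCGlossgeneral} yields the claim.

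In the boundary case $2t>\rho_{\tilde p}$ the infimum defining $\tau_t$ is over an empty set, so $\tau_t=\tilde p$ and $\ridgecgest{\tau_t}=\ridgeest$. Here I would bypass \cref{PropCGlossgeneral}, whose approximation bound is evaluated at the too-small time $\rho_{\tilde p}/2<t$ and points the wrong way, and instead compute the ridge risk directly from \cref{PropErrDecomp} with $R\equiv 0$, giving $\regpredrisk{\ridgeest}=\norm{\Sigmahatlambda^{1/2}(\gamma-\beta_\lambda)}^2+\tfrac{\sigma^2}{n}\trace(\Sigmahatlambda^{-1}\Sigmahat)$. The bias is bounded by $A_{\lambda,\gamma}(\filterGF)$ because the cross term $\scapro{\Sigmahatlambda\exp(-t\Sigmahatlambda)\beta_\lambda}{\gamma-\beta_\lambda}$ is non-negative, which is exactly Condition~\eqref{gammacond} read at $\tau_t$ and holds for all times for the canonical targets $\gamma=\beta_{\lambda'}$, $\lambda'\in[0,\lambda]$. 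For the variance I use $2t>\rho_{\tilde p}\ge 1/(s_p+\lambda)=\norm{\Sigmahatlambda^{-1}}$, valid since the smallest distinct eigenvalue $s_p+\lambda$ contributes $1/(s_p+\lambda)$ to $\rho_{\tilde p}$; consequently $2t\wedge\Sigmahatlambda^{-1}=\Sigmahatlambda^{-1}$ and $\trace(\Sigmahatlambda^{-1}\Sigmahat)=\trace((2t\wedge\Sigmahatlambda^{-1})\Sigmahat)$. Both terms are then absorbed into the factor $4$ with room to spare, and the two cases agree at the boundary $2t=\rho_{\tilde p}$.

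I expect the main obstacle to be conceptual rather than computational: justifying that the random, noise-dependent index $\tau_t$ may be inserted into the pathwise bound of \cref{PropCGlossgeneral} while still permitting a clean conditional-expectation argument. The resolution, and the very reason the reparametrisation by $\rho_t$ was chosen, is that $\rho_{\tau_t}$ is $X$-measurable, so both the coefficient matrix of the stochastic term and the entire approximation error become deterministic given $X$, neutralising the CG non-linearity. The second delicate point is the boundary case, where one must recognise that the loose approximation bound of \cref{PropCGlossgeneral} is unusable and that the spectral lower bound $\rho_{\tilde p}\ge\norm{\Sigmahatlambda^{-1}}$ is precisely what collapses the truncation $2t\wedge\Sigmahatlambda^{-1}$ to $\Sigmahatlambda^{-1}$.
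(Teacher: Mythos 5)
Your proof is correct and follows essentially the same route as the paper's: the same $X$-measurable dichotomy ($2t\le\rho_{\tilde p}$ versus $2t>\rho_{\tilde p}$, with $\rho_{\tilde p}=\sum_i(\tilde s_i+\lambda)^{-1}$), the same application of \cref{PropCGlossgeneral} with $\rho_{\tau_t}=2t$ in the main regime, and the same conditional-covariance step $\E[\eps_\lambda\eps_\lambda^\top\,|\,X]=\tfrac{\sigma^2}{n}\Sigmahatlambda^{-1}\Sigmahat$ combined with the identity $(2tx\wedge 1)x^{-1}=2t\wedge x^{-1}$. The one place you diverge is the terminal case $\tau_t=\tilde p$: the paper writes $\regpredrisk{\ridgecgest{\tilde p}}=\E[\norm{\eps_\lambda}^2\,|\,X]$, which silently discards the bias $\norm{\Sigmahatlambda^{1/2}(\gamma-\beta_\lambda)}^2$ and is an identity only for $\gamma=\beta_\lambda$; you keep this bias term (via \cref{PropErrDecomp} with $R\equiv 0$) and absorb it into $A_{\lambda,\gamma}(\filterGF)$ through non-negativity of the cross term, which makes your treatment the more careful one for general targets. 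One caveat in that step: the cross term you need is $\scapro{\Sigmahatlambda\exp(-t\Sigmahatlambda)\beta_\lambda}{\gamma-\beta_\lambda}\ge 0$ with exponent $t$, whereas Condition~\eqref{gammacond} read at $\tau_t=\tilde p$ carries the smaller exponent $\rho_{\tilde p}/2<t$, so the two are not literally the same condition; as you yourself note, for the canonical targets $\gamma=\beta_{\lambda'}$, $\lambda'\in[0,\lambda]$, the sign is non-negative for every exponent, so the argument closes there, and this is no worse than the gap in the paper's own terminal-case computation.
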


\begin{proof}
By definition of $\filterCG[\tilde{t}]$ via linear interpolation, $\rho_{\tilde t}=\abs{(\filterCG[\tilde{t}])'(0)}$ is continuous in $\tilde t$ and by \citep[Lemma~4.5]{HucRei2025} it grows from zero at $\tilde t=0$ to $\sum_{i=1}^{\tilde p}(\tilde s_i+\lambda)^{-1}$ at $\tilde t=\tilde p$, where $\tilde s_1,\ldots,\tilde s_{\tilde p}$ denote the $\tilde p$ distinct eigenvalues of $\Sigmahat$. We conclude that $\rho_{\tau_t}=2t$ holds for $t\in[0,\frac12\sum_{i=1}^{\tilde p}(\tilde s_i+\lambda)^{-1}]$. In this case the claim follows directly from \cref{PropCGlossgeneral}, noting again $\E[\eps_\lambda\eps_\lambda^\top\,|\,X]=\tfrac{\sigma^2}{n}\Sigmahatlambda^{-1}\Sigmahat$.

For $t>\frac12\sum_{i=1}^{\tilde p}(\tilde s_i+\lambda)^{-1}$ we have $\tau_t=\tilde p$ and we obtain the terminal CG iterate $\ridgecgest{\tilde p}=\ridgeest$ so that
\begin{equation*}
\regpredrisk{\ridgecgest{\tau_t}}=\E[\norm{\eps_\lambda}^2\,|\,X]
=\tfrac{\sigma^2}{n}\trace\big(\Sigmahatlambda^{-1}\Sigmahat\big).
\end{equation*}
On the other hand, we have $2t>\max_i(\tilde s_i+\lambda)^{-1}=\norm{\Sigmahatlambda^{-1}}$ for this case, implying $(2t\wedge \Sigmahatlambda^{-1})=\Sigmahatlambda^{-1}$. This shows $\regpredrisk{\ridgecgest{\tau_t}}= \tfrac{\sigma^2}{n}\trace\big((2t\wedge \Sigmahatlambda^{-1})\Sigmahat\big)$ and yields the asserted bound also in this case.
\end{proof}

Based on the previous result, it remains to compare the CG and GF stochastic errors. We remark that $(1-\exp(-tx))^2\le 1-\exp(-tx)\le 2tx\wedge 1$ for $x,t\ge 0$ always implies for the stochastic error bound of \cref{CorCGGF1}
\begin{equation*}
\tfrac{4\sigma^2}{n}\trace\big((2t\wedge \Sigmahatlambda^{-1})\Sigmahat\big)\ge \tfrac{\sigma^2}{n}\trace\big((I_p-\exp(-t\Sigmahatlambda))^2\Sigmahatlambda^{-1}\Sigmahat\big)=S_\lambda(\filterGF).
\end{equation*}
The left-hand side may in fact be much larger than $S_\lambda(\filterGF)$, just note that for $t\downarrow 0$ it is of order $t$, while the right-hand side is of order $t^2$. The ultimate reason is that the CG stochastic error term is $\ridgestocherrorCG{t} = \norm{(I_p-\filterCG[t,<](\Sigmahatlambda))^{1/2}\varepsilon_{\lambda}}^2$ and not $\norm{(I_p-\filterCG[t,<](\Sigmahatlambda))\varepsilon_{\lambda}}^2$, a term formed in analogy to the standard error decomposition in \cref{PropErrDecomp}. Still, it only depends on the spectrum of the empirical covariance matrix $\Sigmahat$ and we can find easy spectral conditions for a valid comparison between CG and GF error. We arrive at our main comparison theorem between CG and GF.

\begin{theorem}\label{ThmMain}
Assume $s_1+\lambda > 0$, where $s_1\ge\cdots\ge s_p\ge 0$ denote the eigenvalues of $\Sigmahat$. For $t\ge \frac1{2(s_1+\lambda)}$ set
\begin{equation*}
C_{t,\lambda} \coloneqq \begin{cases}
\textstyle{\big(\sum_{j\ge i_t} s_j\big)\big(\sum_{j<i_t}\tfrac{(s_{i_t}+\lambda)s_j}{s_j+\lambda}+\sum_{j\ge i_t}\tfrac{(s_j+\lambda)s_j}{s_{i_t-1}+\lambda}\big)^{-1}},& t< \tfrac12(s_p+\lambda)^{-1},\\
0,& t\ge \tfrac12(s_p+\lambda)^{-1},
\end{cases}
\end{equation*}
with $i_t\coloneqq \min\{j\,|\,s_j<(2t)^{-1}-\lambda\}\in \{2,\ldots,p\}$ for $t\in [\frac12(s_1+\lambda)^{-1},\frac12(s_p+\lambda)^{-1})$.
Then $C_{t,\lambda}\le C_{t,0}$ holds
 and under Condition~\eqref{gammacond} for $\gamma$ we have
\begin{equation}\label{EqMainbound}
 \regpredrisk{\ridgecgest{\tau_t}}  \le \tfrac{4(1+C_{t,\lambda})}{(1-e^{-1/2})^2}\regpredrisk{\pengradflowest{t}}, \quad t\ge \tfrac1{2\norm{\Sigmahatlambda}}.
\end{equation}
\end{theorem}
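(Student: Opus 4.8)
The plan is to feed the CG upper bound of \cref{CorCGGF1} and the exact gradient flow risk of \cref{PropErrDecomp} into a single scalar comparison over the spectrum of $\Sigmahat$. Writing $\mu_j\coloneqq s_j+\lambda$ for the eigenvalues of $\Sigmahatlambda$ and $x_j\coloneqq t\mu_j$, \cref{CorCGGF1} reads $\regpredrisk{\ridgecgest{\tau_t}}\le 4A_{\lambda,\gamma}(\filterGF)+\tfrac{4\sigma^2}{n}\sum_j(2t\wedge\mu_j^{-1})s_j$, while \cref{PropErrDecomp} applied to the deterministic filter $\filterGF=\exp(-t\,\cdot)$ gives $\regpredrisk{\pengradflowest{t}}=A_{\lambda,\gamma}(\filterGF)+\tfrac{\sigma^2}{n}\,G$ with $G\coloneqq\sum_j(1-e^{-t\mu_j})^2\tfrac{s_j}{\mu_j}$. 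The approximation terms are identical and the target prefactor satisfies $\tfrac{4(1+C_{t,\lambda})}{(1-e^{-1/2})^2}\ge 4$, so after absorbing $4A_{\lambda,\gamma}(\filterGF)$ it remains to prove the purely spectral inequality $\sum_j(2t\wedge\mu_j^{-1})s_j\le\tfrac{1+C_{t,\lambda}}{(1-e^{-1/2})^2}\,G$.

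I would attack this by splitting the index set at $i_t$, which separates the large eigenvalues $\mu_j\ge(2t)^{-1}$ (equivalently $x_j\ge\tfrac12$, for $j<i_t$) from the small ones $\mu_j<(2t)^{-1}$ (i.e.\ $x_j<\tfrac12$, for $j\ge i_t$). The whole argument rests on two elementary scalar facts: for $x\ge\tfrac12$ one has $(1-e^{-x})^2\ge(1-e^{-1/2})^2$, and for $x\in[0,\tfrac12]$ one has $2x(1-e^{-1/2})\le 1-e^{-x}$; the latter follows since $f(x)=1-e^{-x}-2x(1-e^{-1/2})$ is concave with $f(0)=f(\tfrac12)=0$, hence $f\ge 0$ on $[0,\tfrac12]$. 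The second inequality is tight at $x=0$ and $x=\tfrac12$, which is exactly what forces the threshold weights $s_{i_t}+\lambda$ and $s_{i_t-1}+\lambda$ appearing in the denominator of $C_{t,\lambda}$.

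For the large block I use the first fact termwise to get $\sum_{j<i_t}\mu_j^{-1}s_j\le(1-e^{-1/2})^{-2}\sum_{j<i_t}(1-e^{-t\mu_j})^2\tfrac{s_j}{\mu_j}\le(1-e^{-1/2})^{-2}G$, which supplies the summand $1$ in $1+C_{t,\lambda}$. For the small block $(2t\wedge\mu_j^{-1})s_j=2ts_j$, so I must bound $2t\sum_{j\ge i_t}s_j$. Writing $\sum_{j\ge i_t}s_j=C_{t,\lambda}D$ with $D$ the denominator of $C_{t,\lambda}$, this reduces to the single estimate $2t(1-e^{-1/2})^2D\le G$. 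Splitting $D=(s_{i_t}+\lambda)\sum_{j<i_t}\tfrac{s_j}{\mu_j}+(s_{i_t-1}+\lambda)^{-1}\sum_{j\ge i_t}\mu_j s_j$, I bound its first part using $2t(s_{i_t}+\lambda)<1$ together with the first fact, and its second part using $(s_{i_t-1}+\lambda)^{-1}\le 2t$ together with the second fact applied with $x=t\mu_j\le\tfrac12$; each part is then dominated by the corresponding block of $G$, and the two blocks reassemble into $G$. Adding the large and small contributions yields the spectral inequality with constant $(1+C_{t,\lambda})/(1-e^{-1/2})^2$, hence the main bound \eqref{EqMainbound}.

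It then remains to settle the boundary regimes and the monotonicity claim. The hypothesis $t\ge(2\norm{\Sigmahatlambda})^{-1}=(2\mu_1)^{-1}$ guarantees $x_1\ge\tfrac12$, so index $1$ is large, $i_t\ge 2$, and $s_{i_t-1}+\lambda$ is well defined; for $t\ge\tfrac12(s_p+\lambda)^{-1}$ every $x_j\ge\tfrac12$, the small block is empty, $C_{t,\lambda}=0$, and only the first fact is needed. For $C_{t,\lambda}\le C_{t,0}$ I would argue that $D$ is monotone under increasing $\lambda$; I expect this to be the main technical obstacle, since the cutoff index $i_t=\min\{j:s_j<(2t)^{-1}-\lambda\}$ itself depends on $\lambda$, so one cannot merely differentiate $D$ at a fixed split but must also track how $i_t$ jumps as $\lambda$ decreases to $0$. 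The conceptual crux of the whole proof, however, is the exact matching in the previous paragraph: the denominator of $C_{t,\lambda}$ is reverse-engineered so that $2t(1-e^{-1/2})^2D$ telescopes precisely into the gradient flow stochastic sum $G$ via the two tight scalar inequalities.
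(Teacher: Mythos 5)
Your derivation of the bound \eqref{EqMainbound} is correct and is, modulo bookkeeping, the paper's own proof. The paper likewise starts from \cref{CorCGGF1} and the risk formula of \cref{PropErrDecomp}, absorbs the approximation term because the prefactor exceeds $4$, and reduces everything to the same spectral comparison: it bounds $\trace((2t\wedge \Sigmahatlambda^{-1})\Sigmahat)$ by $\trace((2t\wedge \Sigmahatlambda^{-1})^2\Sigmahatlambda\Sigmahat)+\trace(2t\Sigmahat {\bf 1}(2t<\Sigmahatlambda^{-1}))$, lower-bounds the gradient flow variance term via $1-e^{-x}\ge (1-e^{-1/2})(2x\wedge 1)$ (your two scalar facts are exactly the two branches of this single inequality), and controls the indicator trace by $C_{t,\lambda}$ times the squared trace using precisely your threshold inequalities $s_{i_t}+\lambda<(2t)^{-1}\le s_{i_t-1}+\lambda$; the regime $t\ge\tfrac12(s_p+\lambda)^{-1}$ is dispatched identically. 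Your chain ``small block $=2t\,C_{t,\lambda}D$ together with $2t(1-e^{-1/2})^2D\le G$'' is the same estimate, just organised around $D$ rather than around the ratio of the two traces, so this part needs no changes.

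The genuine gap is the claim $C_{t,\lambda}\le C_{t,0}$, which is part of the theorem and which you leave open. The paper closes it with no monotonicity or differentiation argument at all: apply $\frac{A+\lambda}{B+\lambda}\ge\frac AB$ (valid for $0<A\le B$) termwise in the denominator, namely $\frac{(s_{i_t}+\lambda)s_j}{s_j+\lambda}\ge s_{i_t}$ for $j<i_t$ (take $A=s_{i_t}$, $B=s_j$) and $\frac{(s_j+\lambda)s_j}{s_{i_t-1}+\lambda}\ge \frac{s_j^2}{s_{i_t-1}}$ for $j\ge i_t$ (take $A=s_j$, $B=s_{i_t-1}$), while the numerator is unchanged. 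Be aware, though, that your worry about the cutoff index is well founded: this termwise argument compares $C_{t,\lambda}$ with the $\lambda=0$ expression evaluated at the \emph{same} index $i_t=i_t^{\lambda}$, whereas $C_{t,0}$ as defined uses $i_t^{0}=\min\{j\,|\,s_j<(2t)^{-1}\}\le i_t^{\lambda}$. When some eigenvalues fall in $[(2t)^{-1}-\lambda,(2t)^{-1})$ the two indices differ, and the quantity produced by the termwise bound can then exceed $C_{t,0}$ (for instance with $s_1=0.5$, $s_2=0.49$, ten eigenvalues equal to $0.01$, $t=1$, $\lambda=0.3$, the termwise bound gives roughly $4.5$ while $C_{t,0}\approx 0.61$, although $C_{t,\lambda}\approx 0.24$ so the claim itself still holds). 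So the missing piece in your proposal is the paper's elementary termwise inequality, but a fully rigorous proof of $C_{t,\lambda}\le C_{t,0}$ additionally requires handling the index shift $i_t^0<i_t^\lambda$, a case on which the paper's one-line argument is silent.
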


\begin{remark}
Concerning the condition $t\ge \tfrac1{2(s_1+\lambda)}$, note that for $t\in[0,\frac1{2(s_1+\lambda)}]$ we have $\filterGF(\Sigmahatlambda)\ge e^{-1/2}I_p$, and the residual filter of gradient flow does not provide any significant regularisation.

Recall that for $\gamma=\beta_0$ the pure prediction loss is bounded by the penalised prediction loss so that in this case the left-hand side in \eqref{EqMainbound} can be replaced by $\tfrac{1}{n} \E[\norm{X(\ridgecgest{\tau_t}-\beta_0)}^2 \,|\, X]$. For $\gamma=\beta_\lambda$ we can add the difference between population and excess risk in \eqref{EqPopRisk}, which does not involve the estimator, to both sides of \eqref{EqMainbound}. This gives
\begin{equation*} \E[\emprisk{\ridgecgest{\tau_t}}\,|\,X]  \le \tfrac{4(1+C_{t,\lambda})}{(1-e^{-1/2})^2}\E[\emprisk{\pengradflowest{t}}\,|\,X].
\end{equation*}
\end{remark}

\begin{proof}[Proof of \cref{ThmMain}]
We consider the stochastic risk for $\ridgecgest{\tau_t}$ from \cref{CorCGGF1} and bound
\begin{align*}
\trace\big((2t\wedge \Sigmahatlambda^{-1})\Sigmahat\big) &\le \trace\big((2t\wedge \Sigmahatlambda^{-1})^2\Sigmahatlambda\Sigmahat\big)+ \trace\big(2t\Sigmahat {\bf 1}(2t<\Sigmahatlambda^{-1})\big).
\end{align*}
On the other hand, the conditional variance term for $\pengradflowest{t}$ from \cref{PropErrDecomp} can be bounded from below using $1-e^{-x}\ge (1-e^{-1/2})(2x\wedge 1)$, $x\ge 0$,
\begin{equation*}
    \trace\big((I_p-\filterGF(\Sigmahatlambda))^2\Sigmahatlambda^{-1}\Sigmahat\big) \ge (1-e^{-1/2})^2
\trace\big((2t\wedge \Sigmahatlambda^{-1})^2\Sigmahatlambda\Sigmahat\big).
\end{equation*}
We thus obtain by \cref{CorCGGF1}
\begin{align*}
\regpredrisk{\ridgecgest{\tau_t}}  &\le 4(1-e^{-1/2})^{-2} \Big(1+\tfrac{\trace(2t\Sigmahat {\bf 1}(2t<\Sigmahatlambda^{-1}))}{ \trace((2t\wedge \Sigmahatlambda^{-1})^2\Sigmahatlambda\Sigmahat)}\Big) \regpredrisk{\pengradflowest{t}}.
\end{align*}
Since by definition $s_{i_t}+\lambda<(2t)^{-1}\le s_{i_t-1}+\lambda$ holds for $t\in [\frac12(s_1+\lambda)^{-1},\frac12(s_p+\lambda)^{-1})$, we obtain
\begin{align*}
\tfrac{\trace(2t\Sigmahat {\bf 1}(2t<\Sigmahatlambda^{-1}))}{ \trace((2t\wedge \Sigmahatlambda^{-1})^2\Sigmahatlambda\Sigmahat)} &= \tfrac{\sum_{j\ge i_t} s_j}{\sum_{j<i_t}(2t)^{-1}(s_j+\lambda)^{-1}s_j+\sum_{j\ge i_t}2t(s_j+\lambda)s_j}\\
&\le \tfrac{\sum_{j\ge i_t} s_j}{\sum_{j<i_t}(s_{i_t}+\lambda)(s_j+\lambda)^{-1}s_j+\sum_{j\ge i_t}(s_{i_t-1}+\lambda)^{-1}(s_j+\lambda)s_j}=C_{t,\lambda}.
\end{align*}
For $t\ge \frac12(s_p+\lambda)^{-1}$ the trace in the numerator vanishes and we may use $C_{t,\lambda}=0$.
This gives the comparison of risk bound \eqref{EqMainbound}.
Applying the general inequality $\frac{A+\lambda}{B+\lambda}\ge \frac AB$ for $B\ge A> 0$ to the terms in the denominator of $C_{t,\lambda}$ yields
$C_{t,\lambda}\le C_{t,0}$.
\end{proof}

Let us consider typical examples of feature matrices and check whether the constant $C_{t,\lambda}$ can be bounded uniformly in $t$. In the sequel, we set
\begin{equation} \label{EqDefCbar}
    \overline{C}_{\lambda} \coloneqq \sup_{t \ge (2\norm{\Sigmahatlambda})^{-1}} C_{t,\lambda}.
\end{equation}

\begin{example}\label{ExC0}\
\begin{enumerate}[(i)]
\item
In view of $C_{t,\lambda}\le C_{t,0}$ let us first study $\lambda=0$.
For $t\in [\frac12s_1^{-1},\frac12s_p^{-1})$ we have
\begin{equation}\label{EqCt0} C_{t,0}=\tfrac{\sum_{j\ge i_t} (s_j/s_{i_t})}{(i_t-1)+(s_{i_t}/s_{i_t-1})\sum_{j\ge i_t}(s_j/s_{i_t})^2}.
\end{equation}
Suppose that for some $C>0$, $\alpha>1$,
\begin{equation}\label{CondC1}
\forall\,2\le i\le j\le p:\;s_j/s_i\le C(j/i)^{-\alpha}
\end{equation}
holds. Then
by Riemann sum approximation and $\alpha/(i_t-1) \le \alpha$ for $i_t\ge 2$
\begin{equation*}
    C_{t,0} \le \tfrac{C\sum_{j\ge i_t} (j/i_t)^{-\alpha}}{i_t-1}\le C\tfrac{1+i_t\int_{1}^\infty x^{-\alpha}dx}{i_t-1}\le \tfrac{C(\alpha+1)}{\alpha-1}
\end{equation*}
follows and all $C_{t,\lambda}$ are uniformly bounded by $\overline{C}_{\lambda} \le \tfrac{C(\alpha+1)}{\alpha-1}$. The Condition~\eqref{CondC1} comprises polynomial decay $s_i\thicksim i^{-\alpha}$, $\alpha>1$, as well as geometric decay $s_i\thicksim q^i$, $q\in (0,1)$, of the eigenvalues. Yet, for slowly decaying $s_i\thicksim i^{-\alpha}$ we have $C_{t,0}\thicksim (p/i_t)^{1-\alpha}$ for $\alpha\in(1/2,1)$ and $C_{t,0}\thicksim (p/i_t)^{\alpha}$ for $\alpha\in(0,1/2)$, which becomes unbounded for dimension $p\to\infty$. The  different growth of the CG and GF stochastic error terms in $p$ can also be checked directly in this case. It is not due to a possibly suboptimal bound in \cref{ThmMain}.

\item Consider the slow eigenvalue decay
\begin{equation}\label{CondC2}
\forall\,i=2,\ldots,p:\; \tfrac{s_i}{s_{i-1}}\sum_{j=i}^p\tfrac{s_j^2}{s_i^2}\ge c(p-i+1)
\end{equation}
for some $c>0$. Then, bounding the numerator in \eqref{EqCt0} by $c(p-i_t+1)$, we obtain the uniform bound $\overline{C}_{\lambda} \le 1/c$. As a typical example consider $s_i\thicksim (1-i/(p+1))^\beta$ for some $\beta>0$. Then
\begin{equation*}
    \sum_{j=i}^ps_j^2\thicksim \sum_{j=i}^p(1-j/(p+1))^{2\beta}=(p+1)^{-2\beta}\sum_{k=1}^{p-i+1}k^{2\beta}\thicksim (p-i+1)s_i^2
\end{equation*}
and $s_i/s_{i-1}\thicksim 1$ follow so that Condition \eqref{CondC2} is satisfied for some $c>0$. Given a density $f:[0,1]\to\R^+$ with invertible distribution function $F(x)=\int_0^xf(y)dy$, we might think of a regular sample with $s_i=F^{-1}(1-i/(p+1))$. Hence, if the quantile function satisfies $F^{-1}(x)\thicksim x^\beta$ near zero, e.g.\ if $f(x)\thicksim x^{1/\beta-1}$, then $C_{t,0}$ is uniformly bounded in $t$. The same conclusion holds for any shifted support interval of $f$ instead of $[0,1]$, in particular for the Marchenko--Pastur density ($\beta=2$ or $\beta=2/3$).

\item Finally, we consider a spiked covariance model for $\Sigmahat$ with eigenvalues $s_i\ge 1$ for $i=1,\ldots, r$ and $s_i\le\eps$ with small $\eps\in(0,1)$ for $i=r+1,\ldots,p$. Due to the large gap $s_r-s_{r+1}\ge 1-\eps$, the situation is more intricate and we consider directly $C_{t,\lambda}$. We have $i_t\le r$ for $t<\frac12(s_r+\lambda)^{-1}\le \frac{1}{2} (1+\lambda)^{-1}$ and $i_t\ge r+2$ for $t\ge \frac12(s_{r+1}+\lambda)^{-1}\ge \frac12(\eps+\lambda)^{-1}$. We obtain
\begin{equation*}
    C_{t,\lambda} \thicksim\begin{cases} \frac{(r-i_t)_++(p-r)\eps}{r+(p-r)(1+\lambda)^{-1}(\eps+\lambda)\eps}\lesssim \frac{(p-r)\eps}{r}\wedge \frac{1+\lambda}{\eps+\lambda},& i_t\le r+1,\\ \frac{(p-i_t+1)\eps}{r+(p-r)\eps}\lesssim 1 ,& i_t\ge r+2.
    \end{cases}
\end{equation*}
We see that $C_{t,\lambda}$ is always uniformly bounded for $t\ge \frac12(\eps+\lambda)^{-1}$, when the bulk of the $p-r$ small eigenvalues starts to induce significant regularisation. For smaller $t$, however, conjugate gradients reduces the stochastic error within the bulk by an even smaller factor than gradient flow and this might nevertheless have a global effect when $p-r$ is large. Still, $C_{t,\lambda}$ remains uniformly bounded for all $t\le\frac12(s_1+\lambda)^{-1}$ when $\eps\lesssim\frac{r}{p-r}$ or $\lambda\gtrsim 1$. The first condition can be relaxed to $\sum_{i\ge r+1}s_i\lesssim r$ on the trace. Especially, in the high-dimensional regime $p>n$ the condition $\eps\lesssim \frac{r}{n-r}$ suffices because of $\rank(\Sigmahat)\le n$.
\end{enumerate}
\end{example}

Our main theorem has a direct consequence for the best risk of CG along the CG iteration path. In particular, this allows to transfer oracle risk bounds for gradient flow or ridge regression to conjugate gradients. Note that this is even new for the unpenalised problem with $\lambda=0$.

\begin{corollary}\label{CorOracleRisk}
Let $\gamma=\beta_{\lambda'}$ for $\lambda'\in[0,\lambda]$. Then with $C_{t,\lambda}$ from \cref{ThmMain} and $\overline{C}_{\lambda}$ from \eqref{EqDefCbar} we have
\begin{equation*}
\inf_{t\ge 0}\regpredrisk{\ridgecgest{t}} \le 25.9 (1+\overline{C}_{\lambda}) \! \inf_{t\ge  (2\norm{\Sigmahatlambda})^{-1}} \! \regpredrisk{\pengradflowest{t}}
\le  43.7 (1+\overline{C}_{\lambda})\!\inf_{\tilde\lambda\in [\lambda,\lambda+2\norm{\Sigmahatlambda}]} \! \regpredrisk{\ridgeest[\tilde\lambda]}.
\end{equation*}
\end{corollary}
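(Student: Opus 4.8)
The plan is to establish the two inequalities separately, in each case invoking one of the comparison results already proved. For the left inequality I pass from the conjugate gradient oracle to gradient flow via \cref{ThmMain}, and for the right inequality from gradient flow to ridge regression via \cref{PropGFRR}. The common thread is that the choice $\gamma=\beta_{\lambda'}$ with $\lambda'\in[0,\lambda]$ simultaneously verifies both geometric conditions \eqref{gammacond} and \eqref{EqgammaCond1}, so both comparison theorems are directly applicable without any further check.

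For the first inequality I fix an arbitrary $t\ge (2\norm{\Sigmahatlambda})^{-1}$ and note that the reparametrised time $\tau_t$ lies in $[0,\tilde p]$ by construction, hence is a feasible point of the CG regularisation path. Consequently the CG oracle risk is dominated by the risk at $\tau_t$, and the bound \eqref{EqMainbound} gives
\begin{equation*}
\inf_{s\ge 0}\regpredrisk{\ridgecgest{s}}\le \regpredrisk{\ridgecgest{\tau_t}}\le \tfrac{4(1+C_{t,\lambda})}{(1-e^{-1/2})^2}\,\regpredrisk{\pengradflowest{t}}.
\end{equation*}
Replacing $C_{t,\lambda}$ by its supremum $\overline{C}_{\lambda}$ from \eqref{EqDefCbar} and then taking the infimum over all $t\ge (2\norm{\Sigmahatlambda})^{-1}$ on the right-hand side yields exactly the left inequality, once the numerical estimate $4(1-e^{-1/2})^{-2}\le 25.9$ is confirmed.

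For the second inequality I apply \cref{PropGFRR} pointwise: for every $t\ge (2\norm{\Sigmahatlambda})^{-1}$ one has $\regpredrisk{\pengradflowest{t}}\le 1.2985^2\,\regpredrisk{\ridgeest[\lambda+1/t]}$. As $t$ sweeps $[(2\norm{\Sigmahatlambda})^{-1},\infty)$, the penalty $\tilde\lambda=\lambda+1/t$ decreases monotonically through the interval $(\lambda,\lambda+2\norm{\Sigmahatlambda}]$, using $\norm{\Sigmahatlambda}=s_1+\lambda$. Taking infima on both sides thus bounds the GF oracle over this $t$-range by $1.2985^2$ times the RR oracle over $\tilde\lambda\in(\lambda,\lambda+2\norm{\Sigmahatlambda}]$; chaining with the first inequality and absorbing $25.9\cdot 1.2985^2\le 43.7$ into a single constant produces the stated bound.

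The only genuinely subtle point — and the main (mild) obstacle — is that the map $t\mapsto\lambda+1/t$ never attains the left endpoint $\tilde\lambda=\lambda$, whereas the asserted infimum runs over the closed interval $[\lambda,\lambda+2\norm{\Sigmahatlambda}]$. I would close this gap by invoking the continuity of $\tilde\lambda\mapsto\regpredrisk{\ridgeest[\tilde\lambda]}$: since $\regpredrisk{\ridgeest[\tilde\lambda]}\to\regpredrisk{\ridgeest[\lambda]}$ as $\tilde\lambda\downarrow\lambda$, the infimum over the half-open interval equals that over its closure, so no value is lost at the boundary and the infimum in \cref{CorOracleRisk} is legitimate. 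Everything else is careful bookkeeping of the constants $4(1-e^{-1/2})^{-2}$ and $1.2985^2$.
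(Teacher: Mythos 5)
Your proposal is correct and follows essentially the same route as the paper: the first inequality via \cref{ThmMain} with $C_{t,\lambda}\le\overline{C}_\lambda$ and the feasibility of $\tau_t$, the second via a pointwise application of \cref{PropGFRR} with $\tilde\lambda=\lambda+1/t$, and the constants $4(1-e^{-1/2})^{-2}\le 25.9$ and $25.9\cdot 1.2985^2\le 43.7$ check out. Your continuity argument closing the endpoint $\tilde\lambda=\lambda$ is a detail the paper's two-sentence proof glosses over, and it is handled correctly.
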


\begin{proof}
The first inequality follows directly from \cref{ThmMain}, upon replacing the infimum for CG over $\{\tau_t\,|\,t\ge (2\norm{\Sigmahatlambda})^{-1}\}$ by the infimum over $\R^+$  and noting $4/(1-e^{-1/2})^2\le 25.9$. The second inequality is derived from \cref{PropGFRR} with evaluation of the numerical constant.
\end{proof}

As a second application we establish that for sufficiently large penalties $\lambda$ the risk of the gradient flow estimator decays monotonically in $t$ to the risk of $\ridgeest$. While it is not clear whether the CG risk decays monotonically as well because the residual polynomial $\filterCG$ is data-dependent, \cref{ThmMain} then yields a monotone upper bound.

\begin{proposition}\label{PropGFMon}
Let $s_1\ge\ldots\ge s_p$ denote the eigenvalues of $\Sigmahat$ and $v_1,\ldots,v_p$  corresponding normalised eigenvectors. Then
for the target vector $\gamma=\beta_0$ and all $\lambda > 0$ satisfying
\begin{equation}\label{EqLambda}
\lambda \sum_{j=i}^p s_j\scapro{\beta_0}{v_j}^2 \ge \tfrac{\sigma^2}{n} \sum_{j=i}^p s_j \ \forall \, i=1,\dots,p,
\end{equation}
the regularisation path of gradient flow has monotonously decreasing prediction risk $t\mapsto \penpredrisk{\pengradflowest{t}}$ for all $t\ge 0$.
\end{proposition}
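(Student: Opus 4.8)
The plan is to reduce monotonicity to a sign condition on the derivative $\tfrac{d}{dt}\penpredrisk{\pengradflowest{t}}$, made fully explicit through the eigenbasis of $\Sigmahat$, and then to dispatch the resulting $t$-dependent inequality by an Abel summation argument that exploits the tail structure of assumption~\eqref{EqLambda}.

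First I would apply \cref{PropErrDecomp} with the deterministic filter $R=\filterGF$, $\filterGF(x)=\exp(-tx)$, and target $\gamma=\beta_0$. Writing $\beta_0=\sum_j\scapro{\beta_0}{v_j}v_j$ and using $\beta_\lambda=\Sigmahatlambda^{-1}\Sigmahat\beta_0$, hence $\beta_0-\beta_\lambda=\lambda\Sigmahatlambda^{-1}\beta_0$, functional calculus diagonalises every operator and yields the spectral representation
\begin{equation*}
\penpredrisk{\pengradflowest{t}}=\sum_{j=1}^p\frac{1}{s_j+\lambda}\Big(\scapro{\beta_0}{v_j}^2\big(s_j e^{-t(s_j+\lambda)}+\lambda\big)^2+\tfrac{\sigma^2}{n}s_j\big(1-e^{-t(s_j+\lambda)}\big)^2\Big).
\end{equation*}
Abbreviating $w_j\coloneqq e^{-t(s_j+\lambda)}$ and $c_j\coloneqq\scapro{\beta_0}{v_j}$ and differentiating termwise (using $\tfrac{d}{dt}w_j=-(s_j+\lambda)w_j$, whose prefactor cancels the $1/(s_j+\lambda)$), I expect the clean expression
\begin{equation*}
\tfrac{d}{dt}\penpredrisk{\pengradflowest{t}}=2\sum_{j=1}^p s_j w_j\Big(\tfrac{\sigma^2}{n}(1-w_j)-c_j^2\big(s_j w_j+\lambda\big)\Big),
\end{equation*}
so monotone decrease is equivalent to this sum being nonpositive for every $t\ge 0$.

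Next I would bound this sum. Discarding the nonnegative contribution $s_j^2w_j^2c_j^2$ from $c_j^2(s_jw_j+\lambda)$ and using $1-w_j\le 1$, it suffices to prove the weighted inequality $\tfrac{\sigma^2}{n}\sum_j s_j w_j\le\lambda\sum_j s_j w_j c_j^2$, that is $\sum_j w_j d_j\ge 0$ with $d_j\coloneqq\lambda s_j c_j^2-\tfrac{\sigma^2}{n}s_j$. The crucial observation is that, since the $s_j$ are ordered decreasingly, the weights $w_j=e^{-t(s_j+\lambda)}$ are \emph{increasing} in $j$ for every fixed $t\ge 0$. Assumption~\eqref{EqLambda} states precisely that the tail sums $A_i\coloneqq\sum_{j\ge i}d_j$ are nonnegative for all $i$. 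Summation by parts gives $\sum_j w_j d_j=w_1 A_1+\sum_{j\ge 2}(w_j-w_{j-1})A_j$, and since $w_1\ge 0$, $A_j\ge 0$ and $w_j-w_{j-1}\ge 0$, every summand is nonnegative, closing the argument.

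The main obstacle is this last step: the raw derivative condition is genuinely $t$-dependent and does \emph{not} follow termwise from~\eqref{EqLambda}, since the pointwise bound $\lambda c_j^2\ge\sigma^2/n$ is false in general (the assumption only constrains tail sums, not individual indices). The resolution hinges on recognising, after the two clean reductions, that the weights are monotone in the eigenvalue index, so that Abel summation transfers the tail condition~\eqref{EqLambda} to the weighted sum. Verifying the monotonicity direction of $w_j$ and that the discarded terms indeed only help the inequality are the points requiring care; the remaining steps are routine functional calculus.
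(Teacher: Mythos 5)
Your proof is correct and follows essentially the same route as the paper's: both reduce to the sign of the risk derivative via \cref{PropErrDecomp}, discard exactly the same nonnegative term (the $s_j^2w_j^2c_j^2$ contribution, which in the paper appears as the squared term $\norm{\Sigmahatlambda\exp(-t\Sigmahatlambda)\beta_\lambda}^2$) together with the bound $1-w_j\le 1$, and conclude by Abel summation with the monotone weights $e^{-t(s_j+\lambda)}$ against the nonnegative tail sums from \eqref{EqLambda}. The only cosmetic difference is that you work coordinate-wise in the eigenbasis from the start, whereas the paper differentiates in matrix form and passes to the eigenbasis only for the final summation-by-parts step.
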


\begin{remark}
Condition~\eqref{EqLambda} on $\lambda$ is simple, but certainly not optimal in specific situations. When $\beta_0$ is in general position with respect to the basis of eigenvectors $v_i$ (recall that they are random), then $\scapro{\beta_0}{v_j}^2\thicksim p^{-1}\norm{\beta_0}^2$ holds and we require $\lambda\gtrsim \frac{\sigma^2p}{n\norm{\beta_0}^2}$, that is a penalty scaling at least with the inverse of the signal-to-noise ratio. It is natural that in this case stopping the iterations early usually does not decrease the stochastic error sufficiently to allow for smaller total prediction error. This case with $\tau=\norm{\beta_0}$ can also be generated genuinely in high dimensions by a random effects assumption $\beta_0\sim N(0,\tau^2p^{-1}I_p)$ on the parameter vector $\beta_0$, cf.\ \citet{DobWag2018}. Note that there exists a $\lambda>0$ satisfying Condition~\eqref{EqLambda} as long as there is no index $i\in \{1,\dots,p\}$ such that $s_i>0$ and $\scapro{\beta_0}{v_j}=0$ for all $j\ge i$. 

For the target vector $\gamma=\beta_\lambda$ in the excess prediction risk, the approximation error is asymptotically as $t\to\infty$ always of smaller order than the stochastic error and there is no monotonicity in this case.
\end{remark}

\begin{proof}[Proof of \cref{PropGFMon}]
We have $\gamma-\beta_\lambda=\lambda\Sigmahatlambda^{-1}\beta_0$ for $\gamma=\beta_0$.
By the formulas from \cref{PropErrDecomp} for $\pengradflowest{t}$  we obtain  for $t\ge 0$
\begin{equation*}
\tfrac{d}{dt}A_{\lambda,\gamma}(\filterGF)=-2\scapro{\exp(-t\Sigmahatlambda)\beta_\lambda+\lambda\Sigmahatlambda^{-1}\beta_0}
{\Sigmahatlambda^2\exp(-t\Sigmahatlambda)\beta_\lambda}
\le -2\lambda\scapro{\exp(-t\Sigmahatlambda)\beta_0}
{\Sigmahat\beta_0}
\end{equation*}
and similarly for the conditional expectation of the stochastic error
\begin{align*}
\tfrac{d}{dt} \E[S_{\lambda,\gamma}(\filterGF) \, | \, X]&=\tfrac{2\sigma^2}{n}\trace\big((I_p-\exp(-t\Sigmahatlambda))\exp(-t\Sigmahatlambda)\Sigmahat\big)\\
&\le \tfrac{2\sigma^2}{n}\trace\big(\exp(-t\Sigmahatlambda)\Sigmahat\big).
\end{align*}
Combining the bounds, we arrive at
\begin{equation*}
\tfrac{d}{dt}\penpredrisk{\pengradflowest{t}}\le -2e^{-t\lambda}\trace\Big(\exp(-t\Sigmahat)\Sigmahat\big(\lambda\beta_0\beta_0^\top - \tfrac{\sigma^2}{n}I_p
\big)\Big)
\end{equation*}
and we must check whether the trace is non-negative.
Expressing it in the basis $(v_i)$ and summation by parts yield, setting formally $e^{-ts_0} \coloneqq 0$,
\begin{equation*}
\trace\Big(\exp(-t\Sigmahat)\Sigmahat\big(\lambda\beta_0\beta_0^\top - \tfrac{\sigma^2}{n}I_p
\big)\Big)=\sum_{i=1}^p \big(e^{-ts_i}-e^{-ts_{i-1}}\big)\sum_{j=i}^p s_j\big(\lambda\scapro{\beta_0}{v_j}^2-\tfrac{\sigma^2}{n}\big).
\end{equation*}
A simple sufficient condition for this to be non-negative is that $\sum_{j=i}^p s_j(\lambda\scapro{\beta_0}{v_j}^2-\tfrac{\sigma^2}{n})\ge 0$ holds for all $i=1,\ldots,p$. Rearranging terms gives the result.
\end{proof}

Finally, we transfer the in-sample results to the out-of-sample setting. We assume that $\E[x_ix_i^\top]=\Sigma$ for all $i\in \{1, \dots, n\}$ and that $\Sigma$ is positive semidefinite. Similar to $\Sigmahatlambda$ we write for the population counterpart $\Sigmalambda \coloneqq \Sigma+\lambda I_p$. In analogy to \eqref{EqLin}, \eqref{EqRin}, let us consider the
 \emph{regularised out-of-sample prediction loss} and {\it risk}
\begin{equation*}
    \outregpredloss{\hat{\beta}} \coloneqq \norm{\Sigmalambda^{1/2}(\hat{\beta}-\gamma)}^2,\quad \outregpredrisk{\hat{\beta}} \coloneqq \E[\outregpredloss{\hat{\beta}}\,|\,X]
\end{equation*}
of an estimator $\hat\beta$ for a deterministic target vector $\gamma\in\R^p$. For the transfer from in-sample to out-of-sample risk we follow classical routes.

\begin{proposition}\label{PropOutPredRisk}
    Suppose that the normalised feature vectors $\Sigma^{-1/2}x_i$ (with $\Sigma^{-1/2}=(\Sigma^+)^{1/2}$ and the generalised inverse $\Sigma^+$) are $L$-sub-Gaussian:
    \begin{equation*}
    \textstyle\exists L>0\, \forall v\in \R^p:\; \sup_{m \ge 1} m^{-1/2} \E^{1/m}[\abs{\scapro{v}{\Sigma^{-1/2}x_i}}^m] \le L \E^{1/2}[\scapro{v}{\Sigma^{-1/2} x_i}^2].
    \end{equation*}
    Let $\mathcal{N}(\lambda) \coloneqq \trace(\Sigmalambda^{-1}\Sigma)$ denote the effective rank of $\Sigma$. Then there is a constant $C=C(L)>0$ such that for all $z \ge 1$ with probability at least $1-e^{-z}$
    \begin{equation}\label{LemOutPredRisk}
        \big|\outregpredloss{\hat{\beta}}-\regpredloss{\hat{\beta}}\big| \le C \Big( \sqrt{\tfrac{\mathcal{N}(\lambda)}{n}} \vee \tfrac{\mathcal{N}(\lambda)}{n} \vee \sqrt{\tfrac{z}{n}} \vee \tfrac{z}{n} \Big) \outregpredloss{\hat{\beta}}
    \end{equation}
    and
    \begin{equation*}
        \big|\outregpredrisk{\hat{\beta}}-\regpredrisk{\hat{\beta}}\big| \le C \Big( \sqrt{\tfrac{\mathcal{N}(\lambda)}{n}} \vee \tfrac{\mathcal{N}(\lambda)}{n} \vee \sqrt{\tfrac{z}{n}} \vee \tfrac{z}{n} \Big) \outregpredrisk{\hat{\beta}}
    \end{equation*}
    hold for any $\R^p$-valued estimator $\hat{\beta}$.
\end{proposition}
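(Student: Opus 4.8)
The plan is to reduce the difference of the two losses to a single quadratic form and then control it by a spectral-norm concentration bound for the empirical second-moment matrix of suitably normalised feature vectors. Writing $u \coloneqq \hat\beta - \gamma$, the explicit penalty contributions $\lambda\norm{u}^2$ cancel and I obtain
\begin{equation*}
\outregpredloss{\hat\beta} - \regpredloss{\hat\beta} = \norm{\Sigmalambda^{1/2}u}^2 - \norm{\Sigmahatlambda^{1/2}u}^2 = u^\top(\Sigma - \Sigmahat)u.
\end{equation*}
Setting $w \coloneqq \Sigmalambda^{1/2}u$, so that $\norm{w}^2 = \outregpredloss{\hat\beta}$, and inserting $\Sigmalambda^{-1/2}\Sigmalambda^{1/2}$ on either side gives
\begin{equation*}
\big|\outregpredloss{\hat\beta} - \regpredloss{\hat\beta}\big| = \big|w^\top \Sigmalambda^{-1/2}(\Sigma - \Sigmahat)\Sigmalambda^{-1/2}w\big| \le \Delta\,\outregpredloss{\hat\beta}, \quad \Delta \coloneqq \norm{\Sigmalambda^{-1/2}(\Sigma - \Sigmahat)\Sigmalambda^{-1/2}}.
\end{equation*}
This already isolates the desired multiplicative factor $\Delta$, and since $\Delta$ does not depend on $\hat\beta$, the bound holds simultaneously for every estimator.

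It remains to bound $\Delta$. Setting $z_i \coloneqq \Sigmalambda^{-1/2}x_i$, it equals the deviation $\norm{M - \tfrac1n\sum_{i=1}^n z_iz_i^\top}$ of the empirical second-moment matrix of the $z_i$ from $M \coloneqq \E[z_iz_i^\top] = \Sigmalambda^{-1/2}\Sigma\Sigmalambda^{-1/2}$. Two structural facts drive the estimate: since $\Sigma\le\Sigmalambda$ we have $\norm{M}\le 1$, and $\trace(M) = \trace(\Sigmalambda^{-1}\Sigma) = \mathcal{N}(\lambda)$, so the effective rank of $M$ is $r(M) = \trace(M)/\norm{M}\ge\mathcal{N}(\lambda)$; moreover the $z_i$ inherit the $L$-sub-Gaussianity of $\Sigma^{-1/2}x_i$. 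The latter I would verify by writing $\scapro{v}{z_i} = \scapro{A^\top v}{\Sigma^{-1/2}x_i}$ with $A \coloneqq \Sigmalambda^{-1/2}\Sigma^{1/2}$, using $x_i\in\mathrm{range}(\Sigma)$ almost surely (as $\E[\scapro{v}{x_i}^2]=v^\top\Sigma v$ vanishes on $\ker\Sigma$) so that $\Sigma^{1/2}\Sigma^{-1/2}x_i = x_i$; since $\E[\scapro{A^\top v}{\Sigma^{-1/2}x_i}^2] = v^\top AA^\top v = v^\top M v = \E[\scapro{v}{z_i}^2]$, the uncentred moment comparison transfers with the same constant $L$.

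The core step is then a Koltchinskii--Lounici-type spectral concentration inequality for empirical second-moment matrices of sub-Gaussian vectors expressed via the effective rank: for i.i.d.\ $L$-sub-Gaussian $z_i$ with second moment $M$ there is $C=C(L)$ such that, for all $z\ge 1$, with probability at least $1-e^{-z}$,
\begin{equation*}
\Delta \le C\,\norm{M}\Big(\sqrt{\tfrac{r(M)}{n}}\vee\tfrac{r(M)}{n}\vee\sqrt{\tfrac{z}{n}}\vee\tfrac{z}{n}\Big).
\end{equation*}
Substituting $\norm{M}\le 1$ and $\trace(M)=\mathcal{N}(\lambda)$ collapses $\norm{M}\sqrt{r(M)/n} = \sqrt{\norm{M}\,\mathcal{N}(\lambda)/n}\le\sqrt{\mathcal{N}(\lambda)/n}$ and likewise $\norm{M}\,r(M)/n = \mathcal{N}(\lambda)/n$, while the two $z$-terms are only reduced by the factor $\norm{M}\le 1$; this yields precisely the factor in \eqref{LemOutPredRisk}. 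I expect the main obstacle to be invoking this concentration bound in its dimension-free form: one must ensure that the effective rank $r(M)$ rather than the ambient dimension $p$ governs the rate, which is essential in the regime $p>n$, and that the prefactor $\norm{M}$ is retained so that it can absorb $r(M)$ down to $\mathcal{N}(\lambda)$ after substitution.

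Finally, the risk bound requires no further probabilistic input. The factor $\Delta$ and the event $\mathcal{A}\coloneqq\{\Delta\le C(\cdots)\}$ are $X$-measurable, whereas the losses depend additionally on the noise $\eps$. On $\mathcal{A}$ the loss inequality holds pointwise in $\eps$, so taking the conditional expectation $\E[\cdot\mid X]$ and using the triangle inequality,
\begin{equation*}
\big|\outregpredrisk{\hat\beta} - \regpredrisk{\hat\beta}\big| \le \E\big[\,\big|\outregpredloss{\hat\beta}-\regpredloss{\hat\beta}\big| \bigm| X\big] \le \Delta\,\E[\outregpredloss{\hat\beta}\mid X] = \Delta\,\outregpredrisk{\hat\beta},
\end{equation*}
which on $\mathcal{A}$ gives the second display with the same factor and the same probability.
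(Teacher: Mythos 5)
Your proof is correct and follows essentially the same route as the paper's: the identical reduction of $\outregpredloss{\hat\beta}-\regpredloss{\hat\beta}$ to the quadratic form bounded by $\norm{\Sigmalambda^{-1/2}(\Sigma-\Sigmahat)\Sigmalambda^{-1/2}}\,\outregpredloss{\hat\beta}$, the same Koltchinskii--Lounici spectral concentration bound for the whitened vectors $\Sigmalambda^{-1/2}x_i$, and the same Jensen/conditional-expectation step for the risk statement. Your explicit verification that sub-Gaussianity transfers to the $\Sigmalambda^{-1/2}x_i$ (via $A=\Sigmalambda^{-1/2}\Sigma^{1/2}$ and the range argument) and your bookkeeping of the prefactor $\norm{M}\le 1$ against the effective rank are details the paper asserts without proof, so they are welcome but do not change the argument.
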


\begin{proof}
    We have
    \begin{align}
        \abs{\outregpredloss{\hat{\beta}}-\regpredloss{\hat{\beta}}} &= \abs{\scapro{\hat{\beta}-\gamma}{(\Sigmalambda - \Sigmahatlambda)(\hat{\beta}-\gamma)}} \notag\\
        &= \abs{\scapro{\Sigmalambda^{1/2}(\hat{\beta}-\gamma)}{\Sigmalambda^{-1/2}(\Sigma - \Sigmahat)\Sigmalambda^{-1/2}\Sigmalambda^{1/2}(\hat{\beta}-\gamma)}} \notag\\
        &\le \norm{\Sigmalambda^{-1/2}(\Sigma - \Sigmahat)\Sigmalambda^{-1/2}} \norm{\Sigmalambda^{1/2}(\hat{\beta}-\gamma)}^2 \notag\\
        &= \norm{\Sigmalambda^{-1/2}(\Sigma - \Sigmahat)\Sigmalambda^{-1/2}} \outregpredloss{\hat{\beta}}. \label{EqBoundOutPredLoss}
    \end{align}
    The transformed data vectors $x_i'=\Sigmalambda^{-1/2}x_i$, $i=1,\dots,n$, have the covariance matrix $\Sigma'= \Sigmalambda^{-1/2} \Sigma \Sigmalambda^{-1/2}$ and are by definition still sub-Gaussian with the same constant $L$.  For their sample covariance matrix $\Sigmahat' = \Sigmalambda^{-1/2} \Sigmahat \Sigmalambda^{-1/2}$ we deduce from \citet[Theorem~9]{KolLou2017} and $\mathcal{N}(\lambda)=\trace(\Sigma')$ that there is a constant $C=C(L) > 0$ such that for all $z \ge 1$ with probability at least $1-e^{-z}$
    \begin{equation}\label{EqKolLou}
        \norm{\Sigmalambda^{-1/2}(\Sigma - \Sigmahat)\Sigmalambda^{-1/2}} = \norm{\Sigma'-\Sigmahat'} \le C \Big( \sqrt{\tfrac{\mathcal{N}(\lambda)}{n}} \vee \tfrac{\mathcal{N}(\lambda)}{n} \vee \sqrt{\tfrac{z}{n}} \vee \tfrac{z}{n} \Big).
    \end{equation}
    This gives the first claim. Jensen's inequality then implies directly the second statement.
\end{proof}

The effective rank clearly satisfies $\mathcal{N}(\lambda)\le p (1+\lambda\norm{\Sigmahat})^{-1}$, but is usually much smaller. The bound in \cref{LemOutPredRisk} depends on $\outregpredloss{\hat{\beta}}$. By the argument in \citet[Lemma~5]{HucWah2023}, the following holds  for some $c_1,c_2>0$: If $\mathcal{N}(\lambda) \le c_1 n$, then $\norm{\Sigmalambda^{-1/2}(\Sigma - \Sigmahat)\Sigmalambda^{-1/2}} \le 1/2$ with probability at least $1-e^{-c_2n}$. Insertion into \eqref{EqBoundOutPredLoss} yields $\outregpredloss{\hat{\beta}} \le 2 \regpredloss{\hat{\beta}}$ and $\outregpredrisk{\hat{\beta}} \le 2 \regpredrisk{\hat{\beta}}$ with probability at least $1-e^{-c_2n}$. This allows us to control directly the out-of-sample prediction risk in terms of the in-sample risk. In conclusion, we can bound the out-of-sample prediction risk for CG iterations by the corresponding GF and RR risks in \cref{ThmMain} with high probability. We abstain from restating it in all technical detail. A different, elegant way for bounding the out-of-sample prediction risk has been found by \citet{MouRos2022}, but it seems tailored to ridge estimators. Let us  point out that spectral norm bounds like~\eqref{EqKolLou} also allow to derive bounds on $\overline{C}_{\lambda}$ directly in terms of the population feature covariance matrix $\Sigma$ with high probability.

\section{Numerical illustration}\label{SecNum}

\begin{figure}
    \centering
     \includegraphics[width=\textwidth]{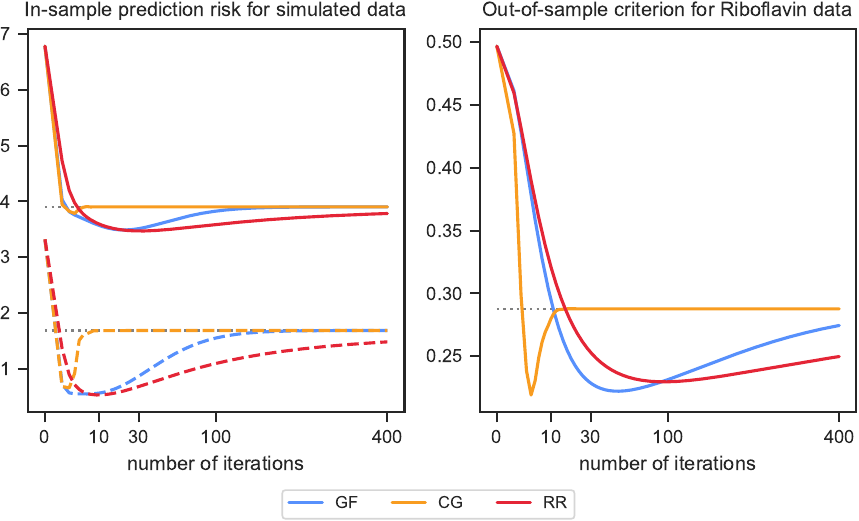}
    \caption{Left: Prediction risk in simulation example for $\gamma=\beta_0$ (solid lines) and $\gamma=\beta_\lambda$ (dashed lines). The grey lines represent the terminal risk of the corresponding ridge estimator $\ridgeest$. Right: Out-of-sample ridge criterion for the Riboflavin data set and the three estimation methods.}
    \label{FigSim1}
\end{figure}

In a simulation example, we consider a high-dimensional setting with $n=400$ observations and parameter dimension $p=500$. The features $x_i$ are i.i.d.\ $N(0,\Sigma)$-distributed where the eigenvalues of $\Sigma$ are $\lambda_1=\cdots=\lambda_{20}=100$ and $\lambda_{21}=\cdots=\lambda_{500}=1$. The error variables $\eps_i$ are i.i.d.\ $N(0,6)$-distributed. The coefficient vector $\beta_0\in\R^p$ was generated by an $N(0,p^{-1} I_p)$-law once for all $1000$ Monte Carlo runs. The penalisation parameter is chosen as $\lambda = 3$.
As a proxy for the gradient flow estimator $\pengradflowest{t}$ we consider the gradient descent iterates $\ridgegdest{k}$ with learning rate $\eta=1/(2\lambda + \norm{\Sigmahat})$. In \cref{FigSim1}(left) we plot the Monte Carlo  prediction risks of $\ridgecgest{k}$, $\ridgegdest{k}$ and $\ridgeest[\lambda+1/(\eta k)]$ as a function of the iteration number $k$, interpolating linearly in-between. The plot shows the in-sample prediction risk, but it turns out to be practically identical with the out-of-sample prediction risk.
The dashed lines indicate the results for excess prediction risk with $\gamma=\beta_\lambda$ and the solid lines refer to the penalised prediction risk with $\gamma=\beta_0$. The iteration numbers are plotted on a quadratic scale so that smaller iteration numbers have a better resolution.

Apparently, the regularisation paths all have a similar shape and there are intermediate iterates where the prediction risk is smaller than for the terminal value $\ridgeest$. This means that stopping (conjugate) gradient descent early does not only reduce computational time but also decreases statistical errors. The same effect can be produced by using a larger penalty $\lambda'>\lambda$ in ridge regression, but in practice the ridge estimator has again to be determined iteratively.

In this example we see that the minimal risk of conjugate gradients is slightly larger than the minimal risk of gradient descent and ridge regression, respectively.
On the other hand, conjugate gradients need much fewer iteration steps to attain the minimal risk value and to reach the terminal value than gradient descent. This computational advantage of CG is even much more pronounced in other examples where less regularisation is required, compare also the numerical results for polynomially decaying eigenvalues of $\Sigma$ in statistical inverse problems \citep{BlaHofRei2018,HucRei2025}. Despite the seemingly restrictive simulation setup, unreported simulations under more general conditions show comparable outcomes.

For illustration with real data, we consider the \textit{Riboflavin} data set from the R package hdi~\citep{MeiDezMei2021}. This data set contains data about the riboflavin production by Bacillus subtilis containing $n=71$ observations of $4088$ predictors (gene expressions) and a one-dimensional response (riboflavin production). After standardising the variables, we select a subset of $p=2n$ features once at random, leading to a high-dimensional setting with $p/n = 2$.
Then the data was split in $50$ training observations and $21$ test data points, drawn randomly 1000 times. For the penalisation parameter $\lambda =0.1$, \cref{FigSim1}(right) shows the regularisation paths in terms of the out-of-sample criterion ${\cal E}_\lambda^{\mathrm{out}}(\hat\beta)$, the ridge criterion ${\cal E}_\lambda(\hat\beta)$ evaluated on the test set, taking the mean over all $1000$ splits in training and test data. Again, the minima of the different procedures are comparable, this time with CG performing best. CG needs significantly less iteration steps to attain the minimal risk as well as to converge to the ridge estimator. For $\lambda=0$ the curves look similar, but with a more pronounced U-shape.
In more general settings, unreported simulations with heteroscedastic errors and sparse signals yield risk curves of the same shape as those shown in \cref{FigSim1}(left).

\section{Conclusion}\label{SecConclusion}

Building on the error decomposition \eqref{EqCGDecomp}, we have shown that the risk for the iterates of conjugate gradients can be bounded by that for the iterates of standard gradient flow up to a factor depending only on the penalised empirical covariance matrix $\Sigmahatlambda$. This comparison involves the time shift $t\mapsto \tau_t$, the inverse function of  $\tilde t\mapsto \abs{(\filterCG[\tilde{t}])'(0)}/2$, which can be computed along the CG iterations. While the results in \cite{HucRei2025} have paved the way for this analysis, we have been able to transfer them from the \emph{CG algorithm applied to the normal equations} (CGNE) in inverse problems to the direct CG algorithm in ridge regression. Moreover, we can cover two classes of general loss functions $\ell^{\mathrm{in}}_{\lambda,\gamma}$ and $\ell^{\mathrm{out}}_{\lambda,\gamma}$ for the penalised least squares criterion, and we can bound the factor $\overline{C}_{\lambda}$ for covariance matrix specifications of statistical interest. Since the Krylov subspaces are generated by the penalised quantities $\Sigmahatlambda$ and $y_{\lambda}$, the underlying geometry of the CG analysis, however, does not seem to allow for sufficiently tight bounds for the pure (unpenalised) prediction loss, where $\lambda=0$ and $\gamma = \beta_0$. The simulations exhibit indeed a very similar shape of the regularisation paths and demonstrate that CG achieves its minimal loss and the loss of the terminal ridge estimator much faster than GF.

A natural question is whether we can bound the risks of the regularisation paths in the converse direction of \cref{CorOracleRisk}. Since the bias of gradient flow decays exponentially in $t$, while the bias of ridge regression only decays polynomially, it is clear that for noise level $\sigma\downarrow 0$ we have $\inf_{t\ge 0}\regpredrisk{\ridgeest[\lambda+1/t]}/\inf_{t\ge 0}\regpredrisk{\pengradflowest{t}}\to\infty$, see also the behaviour of the filter functions in \cref{FigResidualPolynomial}(left). For the comparison of gradient flow with conjugate gradients and $t \ge (2\norm{\Sigmahatlambda})^{-1}$ a rigorous analysis is difficult, but we do also not expect that the risk of gradient flow can be bounded universally by that of conjugate gradients. Conjugate gradients seek a solution in the $y$-dependent Krylov subspaces while the regularisation of gradient flow depends only on the empirical covariance of the feature vectors $x_i$. Based on this insight, \citet{FinKri2023} demonstrate convincingly that for certain latent factor models conjugate gradients achieve a small risk, while this cannot be expected for unsupervised (only feature-dependent) regularisation.

In summary, our results demonstrate that conjugate gradients do not only provide a fast numerical solver, but also allow for a clear statistical analysis and come with practically the same statistical guarantees as standard linear methods like GF, GD and RR.  Profiting from the regularisation power, model selection and early stopping rules for CG iterates seem very attractive (compare \citet{HucRei2025} for the case $\lambda=0$) and a closer investigation of the benign overfitting phenomenon for gradient methods would be fascinating, in particular concerning the eigenvalue conditions in \citet{TsiBar2023} and ours in \cref{ThmMain}.


\paragraph{Acknowledgement.}

Financial support by the Deutsche Forschungsgemeinschaft (DFG) and the Austrian Science Fund (FWF, I 5484-N) through the research unit FOR 5381 \emph{Mathematical Statistics in the Information Age -- Statistical Efficiency and Computational Tractability} and by the Aarhus University Research Foundation (AUFF, 47221 and 47388) is gratefully acknowledged.


\bibliography{bibliography}


\newpage
\begin{appendix}
\section*{Summary of notation}\label{TblNotation}
\vfill
\begin{center}
\begin{turn}{90}
\renewcommand{\arraystretch}{1.3}
\setlength{\tabcolsep}{6pt}
\scalebox{0.87}{%
\begin{tabular}{@{}llll@{}}
\toprule
 & \textbf{Ridge regression} \; \eqref{EqDefRRfilter} & \textbf{Gradient flow \; \eqref{EqDefPenGF}} & \textbf{Conjugate gradients} \; \eqref{EqDefInterpolCG}, \eqref{EqDefInterpolResPol}, \eqref{EqDefTruncatedResPols} \\
\midrule
Estimator
  & $\ridgeest[\lambda'] = \Sigmahatlambda^{-1/2}(I_p-\filterRR(\Sigmahatlambda)) y_{\lambda}$
  & $\pengradflowest{t} = \Sigmahatlambda^{-1/2}(I_p-\filterGF(\Sigmahatlambda)) y_{\lambda}$
  & $\ridgecgest{t} = \Sigmahatlambda^{-1/2} (\id_p - \filterCG(\Sigmahatlambda))y_{\lambda}$ \\[6pt]

Filter function
  & $\filterRR(x)=\tfrac{\lambda'-\lambda}{\lambda'-\lambda+x}$
  & $\filterGF(x)=\exp(-tx)$
  &
  $\filterCG[k+\alpha](x) = (1-\alpha)\filterCG[k](x) + \alpha \filterCG[k+1](x),$\\
  & & & $k=0,\dots,\tilde{p}-1, \ \alpha \in (0,1],$\\
  & & & $\filterCG[t,<](x) = \filterCG(x){\bf 1}(x<x_{1,t}), \ t \in [0,\tilde{p}],$\\
  & & & $x_{1,t}$ smallest zero of $\filterCG$ \\[6pt]

Error decomposition
  & \multicolumn{2}{l}{$\regpredloss{\hat\beta} = A_{\lambda,\gamma}(R) + S_{\lambda}(R) - 2C_{\lambda,\gamma}(R)$}
  & $\ridgepredloss{\ridgecgest{t}} = \ridgeapproxerrorCG{t} + \ridgestocherrorCG{t} - 2\ridgecrossCG{t}$ \\
  & \multicolumn{2}{l}{for $R \in \{\filterRR, \filterGF\}$ \; (Prop.~\ref{PropErrDecomp})} & $\hphantom{\ridgepredloss{\ridgecgest{t}}} \le 2\ridgeapproxerrorCG{t} + 2\ridgestocherrorCG{t}$ \;(Prop.~\ref{PropErrorDecomps}) \\[6pt]

Approximation error
  & \multicolumn{2}{l}{$A_{\lambda,\gamma}(R)=\norm{\Sigmahatlambda^{1/2}(R(\Sigmahatlambda)\beta_\lambda+(\gamma-\beta_\lambda))}^2$}
  & $\ridgeapproxerrorCG{t} \le \norm{\Sigmahatlambda^{1/2}\filterCG[t,<](\Sigmahatlambda)^{1/2}\beta_{\lambda}}^2$ \\[6pt]

Stochastic error
  & \multicolumn{2}{l}{$S_{\lambda}(R)=\norm{(I_p-R(\Sigmahatlambda))\varepsilon_\lambda}^2$}
  & $\ridgestocherrorCG{t} = \norm{(I_p-\filterCG[t,<](\Sigmahatlambda))^{1/2}\varepsilon_{\lambda}}^2$ \\
\bottomrule
\end{tabular}}
\end{turn}
\end{center}
\end{appendix}
\vfill
\newpage


\end{document}